\documentclass[11pt]{article}

\usepackage[preprint]{acl}
\usepackage[normalem]{ulem}
\usepackage{times}
\usepackage{latexsym}
\usepackage[T1]{fontenc}
\usepackage[utf8]{inputenc}
\usepackage{microtype}
\usepackage{inconsolata}
\usepackage{graphicx}
\usepackage{amsmath}
\usepackage{amssymb}
\usepackage{amsthm}
\usepackage{booktabs}
\usepackage{multirow}
\usepackage{algorithm}
\usepackage{algorithmic}
\usepackage{enumitem}
\usepackage{authblk}
\usepackage{tabularx}
\usepackage{booktabs}
\usepackage{array}
\usepackage{fvextra}
\newtheorem{theorem}{Theorem}

\usepackage[most]{tcolorbox}

\newtcolorbox{takeawaybox}{
  colback=gray!5,
  colframe=gray!60,
  boxrule=0.5pt,
  arc=2pt,
  left=4pt,
  right=4pt,
  top=3pt,
  bottom=3pt,
  width=\linewidth
}

\newtcolorbox{promptbox}{
  enhanced,
  breakable,
  colback=gray!4,
  colframe=gray!55,
  boxrule=0.4pt,
  arc=2pt,
  left=5pt,
  right=5pt,
  top=4pt,
  bottom=4pt,
  before skip=6pt,
  after skip=8pt,
  fontupper=\small
}

\title{Implicit Reasoning for Large Language Model-based \\ Generative Recommendation}

\author{
    \textbf{Yinhan He}\textsuperscript{†}, 
    \textbf{Liam Collins}\textsuperscript{§}, 
    \textbf{Bhuvesh Kumar}\textsuperscript{§}, \\ 
    \textbf{Jundong Li}\textsuperscript{†}, 
    \textbf{Neil Shah}\textsuperscript{§}, 
    \textbf{Donald Loveland}\textsuperscript{§} 
}
\affil{\textsuperscript{†}University of Virginia, Charlottesville, VA, USA \vspace{-4mm}}
\affil{\textsuperscript{§}Snap Inc., Santa Monica, CA, USA \vspace{-4mm}}
\affil{\texttt{\{nee7ne, jl6qk\}@virginia.edu}, \texttt{\{lcollins2, bhuvesh, nshash, dloveland\}@snap.com}}

\begin{document}
\maketitle

\begin{abstract}
Large Language Models (LLMs) are increasingly adopted as backbones for Generative Recommendation (GR), promising access to pretrained world knowledge. Yet reliably invoking this knowledge for GR remains poorly understood. 
A key obstacle is that LLM-based GR typically represents items with Semantic IDs (SIDs), disrupting LLMs’ natural-language reasoning interface because these tokens are unseen by the LLM during pretraining. Existing approaches address this with expensive multi-stage pipelines that ground SIDs and elicit explicit rationales, but offer limited insight into when and why each stage is necessary. In this work, we 
systematically decompose explicit reasoning training pipelines for LLM-based GR, revealing three key limitations: weakened world-knowledge verbalization, misalignment between SID and natural-language token embedding spaces, and sensitivity to rationale quality---all of which hurt explicit reasoning performance.
To circumvent these issues, we propose \textsc{PauseRec}, a lightweight implicit reasoning paradigm tailored for GR. 
\textsc{PauseRec} is exceptionally practical, avoiding costly
reasoning trace acquisition and 
reasoning alignment training, leading to a multitude of benefits: (1) it outperforms standard explicit CoT 
methods by up to 6.22\%, (2) it reduces training cost by up to \textcolor{red}{65\%} GPU hours, and (3) it speeds up inference by up to \textcolor{red}{71.3\%}. These results position \textsc{PauseRec} as a lightweight alternative to explicit rationale generation, enabling more effective and efficient LLM-based GR\footnote{Work done when Yinhan He was a Research Intern at Snap Inc..}. 
\end{abstract}

\section{Introduction}
\label{sec:intro}

Large Language Models (LLMs) have recently been adopted as backbones for Generative Recommendation (GR), enabling LLM-based GR systems that formulate recommendation as conditional generation: an LLM reads a user history and generates the next item \citep{hua2023up5, bao2023tallrec, rajput2024recommender}. The appeal of LLMs for GR lies in their pretrained world knowledge~\cite{zhao2023survey, huang2023towards,yu2024kola}.
In principle, this knowledge can help infer semantic relationships among historical items, identify a user's latent intent, and map that intent to plausible next items beyond memorized co-occurrences~\citep{wang-etal-2025-agrec, zhang-etal-2025-cove}. 
Yet the process of efficiently and effectively accessing LLMs’ pretrained knowledge 
for GR remains poorly understood~\cite{zhang2026thinking}.

A key obstacle to leveraging an LLM’s world knowledge for GR centers on item representation. Specifically, LLM-based GR systems typically represent items with Semantic IDs (SIDs), i.e., short sequences of special tokens derived from items’ semantic relations~\citep{rajput2023recommender}. SIDs make item generation tractable given their compactness, but they are not natural-language expressions and reside outside the pretrained LLM vocabulary~\citep{li-etal-2021-personalized}.
This creates a mismatch: LLMs access world knowledge through natural language, while the recommendation task is to generate a non-linguistic SID conditioned on other non-linguistic SIDs. We therefore ask: 
\textit{How can pretrained LLM world knowledge be effectively leveraged to improve recommendation over SID tokens?}

Following the broader LLM literature~\cite{yu2024kola, petroni2019language}, one natural answer to this question is leveraging explicit Chain-of-Thought (CoT) reasoning~\citep{wei2022chain, kojima2022large}\footnote{In this work, we use both ``reasoning'' and ``rationales'' to refer to LLMs’ Chain-of-Thought (CoT) process, i.e., the intermediate, step-by-step traces that LLMs generate before producing a final answer~\cite{wei2022chain}.}. Explicit CoT has been shown to improve LLM performance on a range of knowledge-intensive domains, including mathematics~\cite{imani2023mathprompter}, science~\cite{truhn2023large}, and coding~\cite{jiang2026survey}. 
For LLM-based GR, previous methods have pursued a similar goal through multi-step training pipelines. These pipelines typically ground LLMs in SIDs via continual pretraining (CPT) on natural-language item descriptions, optimize next-item prediction with supervised finetuning (SFT), elicit explicit rationales through SFT over reasoning trajectories (which we refer to as CoT SFT) 
, and refine model responses with reinforcement learning (RL) post-training~\citep{liu2025onerec, yu2025thinkrec, liang2026generative}.
Yet existing work provides limited insight into when these stages are necessary and why they help. 
Given each stage's high computation cost, understanding these questions is critical to both justifying the full workflow and identifying more efficient alternatives.

\begin{figure*}
    \centering
    \includegraphics[width=1\linewidth]{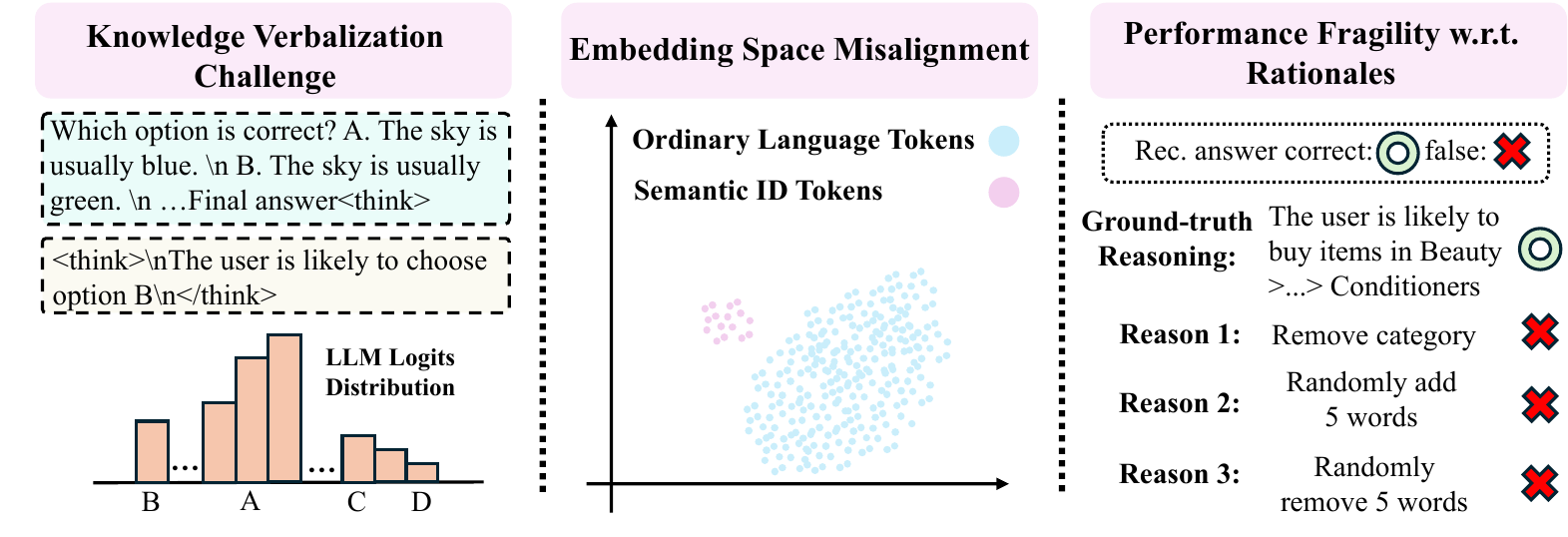}
    \caption{The three identified limitations for explicit CoT in SID-based GR. CoT SFT weakens world-knowledge verbalization (left), separates natural-language and SID embedding spaces (middle), and makes recommendation quality sensitive to rationale perturbations (right), motivating an implicit alternative to verbal rationales. 
    }
    \label{fig:placeholder}
    \vspace{-0.1in}
\end{figure*}
To address this gap, we first analyze explicit reasoning pipelines for LLM-based GR, examining each stage’s contribution and necessity. We begin with CPT 
stage, finding that CPT-trained models can recover coarse item categories but often struggle to identify titles or fine-grained categories, indicating that grounding provides a real but incomplete semantic signal. We then test whether CoT SFT with various reasoning formats, including template-based reasoning and teacher-generated reasoning, can improve recommendation performance. Across these variants, 
CoT SFT consistently \textit{\underline{underperforms}} simple next-item SFT. Performance gains from explicit CoT 
emerge only after expensive RL post-training.

To explain this discrepancy, we identify three limitations of explicit reasoning. First, we find that CoT SFT makes pretrained world knowledge harder to verbalize under standard decoding, even though this knowledge remains recoverable from the model’s logits. Second, we show that text and SID token embeddings become geometrically separated during training. Our theoretical analysis proves that this separation limits the extent to which reasoning expressed in natural-language tokens can shape the final SID prediction. 
Third, we demonstrate that recommendation performance is sensitive to superficial perturbations of the ground-truth rationales. Together, these findings suggest that explicit rationales are a brittle interface for exploiting LLM knowledge in LLM-based GR.

To circumvent the aforementioned challenges, we propose \textsc{PauseRec}, a lightweight \textit{implicit} reasoning framework for SID-based GR. Instead of crafting ground-truth natural-language rationales with expensive teacher models and training the model to generate those rationales, \textsc{PauseRec} inserts a short sequence of trainable \texttt{<pause>} tokens before SID generation. The \texttt{<pause>} token is initialized and pretrained to connect language and SID representations, then optimized only through the final next-item prediction objective, 
giving the model latent computation steps that directly shape SID prediction. \textsc{PauseRec} addresses the three issues of explicit reasoning pipelines by (i) 
removing reliance on verbalizing pretrained knowledge, (ii) bridging the text-SID representation gap through a trainable \texttt{<pause>} token,
 and (iii) avoiding brittle rationale supervision. On multiple Amazon review datasets, \textsc{PauseRec} outperforms SFT and CoT-based methods by up to 6.22\%, while substantially simplifying explicit reasoning pipelines; it reduces training cost by up to \textcolor{red}{65\%} GPU hours and speeds up inference by \textcolor{red}{71.3\%}, positioning implicit reasoning as a stronger and more efficient alternative for LLM-based GR.
Our contributions are as follows:
\begin{itemize}[leftmargin=*, itemsep=0pt, topsep=2pt]
    \item \textbf{Diagnostic analysis.} We decompose explicit reasoning pipelines for LLM-based GR and identify why they fail without RL post-training, including incomplete SID grounding, weakened world-knowledge verbalization, text--SID embeddings mismatch, and sensitivity to rationale formats.
    \item \textbf{Implicit reasoning framework.} We introduce a novel pipeline  termed \textsc{PauseRec}, which uses trainable \texttt{<pause>} tokens to elicit latent reasoning without rationale supervision.
    \item \textbf{Empirical evaluation.} Across three Amazon review datasets, \textsc{PauseRec} improves over standard SFT and CoT-based methods by up to 6.22\% while reducing training and inference overhead.
\end{itemize}

\section{Preliminaries}
\label{sec:prelim}

\subsection{Problem Formulation}

Following the GR literature~\cite{liu2025onerec}, we consider the sequential recommendation task. Let $\mathcal{I}$ denote the set of all items. Given a user's $n$ chronologically ordered interaction history $H = [i_1, i_2, \ldots, i_n]$ where $i_j \in \mathcal{I}$, the task is to predict the next item $i_{n+1}$ that the user will interact with.
Following recent work \citep{rajput2024recommender, bao2023tallrec}, LLM-based GR represents each item $i \in \mathcal{I}$ with a Semantic ID (SID), i.e., a sequence of tokens $s_i = [s_i^{(1)}, s_i^{(2)}, \ldots, s_i^{(L)}]$ of length $L$ that are added to the LLM's vocabulary.
 Recommendation can then be framed as conditional generation:
\begin{equation}
p(i_{n+1} | H) = p(s_{i_{n+1}} | \text{Prompt}(H))
\end{equation}
where $\text{Prompt}(H)$ converts the interaction history into a natural-language prompt listing past items (and optionally metadata). All methods in this paper share this generative formulation; they differ in how reasoning is inserted before SID prediction.

\subsection{Existing Explicit CoT Pipelines for GR}\label{sec: traditional_cot_rec}

We introduce the multiple training stages of existing explicit reasoning pipelines~\cite{liu2025onerec,liang2026generative} for GR as follows:

\noindent\textbf{Continual Pretraining (CPT).} The LLM is finetuned on an interleaved corpus of SIDs and item descriptions, with only SID token embeddings trainable. This stage grounds item semantics into SID token embeddings. Given item $i$ with description $d_i$, the model is trained on:
\begin{equation}
\mathcal{L}_{\text{CPT}} = -\mathbb{E}_{(s_i, d_i)} \left[\log p(s_i | d_i) + \log p(d_i | s_i)\right]
\end{equation}

\noindent\textbf{Next-item Supervised finetuning (SFT).} The CPT model is finetuned on user-items interaction histories to predict the next item by generating its SID:
\begin{equation}
\mathcal{L}_{\text{SFT}} = -\mathbb{E}_{(H, i_{n+1})} \left[\log p(s_{i_{n+1}} | \text{Prompt}(H))\right]
\end{equation}

\noindent\textbf{CoT SFT.} After SFT, the model is finetuned to generate natural-language rationales before the target SID. The training objective pairs each history $H$, rationale $r$, and next item $i_{n+1}$ as
\begin{equation}
\begin{aligned}
&\mathcal{L}_{\text{Reasoning}} =\\ &-\mathbb{E}_{(H, r, i_{n+1})} \left[\log p(r, s_{i_{n+1}} | \text{Prompt}(H))\right]
\end{aligned}
\end{equation}
Here, rationales are method-specific: some~\cite{liu2025onerec} use reasoning templates, while others~\cite{liang2026generative} utilize a teacher LLM.

\noindent\textbf{Reinforcement Learning (RL) Post-training.} Existing methods further apply RL to optimize recommendation rewards 
directly~\citep{liu2025onerec,yu2025thinkrec,liang2026generative}, though this stage is computationally expensive.

\section{Contributions of the Training Stages}

Given the current gap in understanding when and why different training stages make CoT effective for GR, we analyze the role of each stage in Section~\ref{sec: traditional_cot_rec}. We focus on CPT and CoT SFT here; next-item SFT and RL are evaluated in Section~\ref{sec:experiments}.
\subsection{CPT: Can LLMs Recover SID Semantics?}
\label{sec:understanding}

The primary aim of CPT is to ground SID semantics in LLMs, based on the premise that LLMs can reason over SIDs only after understanding their semantics. Before examining reasoning-related stages, we ask how much item-level semantic information an LLM recovers from SIDs after CPT.

\noindent\textbf{Experimental Design.} 
We train a Qwen3-1.7B~\cite{qwen3technicalreport} backbone on Amazon Beauty~\cite{ni2019justifying} with CPT for 2 epochs,  
where each SID is paired with its name and category during training. After CPT, we test whether the model can generate (1)~\textbf{item titles} and (2)~\textbf{item categories}\footnote{In Amazon Beauty~\cite{ni2019justifying}, categories are three-level paths, e.g., ``Beauty > Hair Care > Conditioners.''} at 1-, 2-, and 3-level granularity. We prompt with each test SID and measure exact-match accuracy; prompts and decoding are in Appendix~\ref{app:sid_decode_prompts}.

\noindent\textbf{Results and Analysis.}
\begin{table}[t]
    \centering
    \small
    \resizebox{\columnwidth}{!}{%
    \begin{tabular}{lccc}
    \toprule
    \textbf{Metric} & \textbf{Beauty} & \textbf{Sports} & \textbf{Toys} \\
    \midrule
    1-level Category & 98.00 & 97.83 & 99.56 \\
    2-level Category & 27.80 & 40.49 & 11.85 \\
    Category (Full) & 7.19 & 1.30 & 6.74 \\
    Title Recovery & 0.00 & 0.06 & 0.14 \\
    \bottomrule
    \end{tabular}
    }
    \caption{SID metadata recovery after CPT. The model recovers coarse one-level categories almost perfectly, but fails on item titles and fine-grained categories, showing that SID grounding provides partial semantic information rather than precise item-level understanding.}
    \label{tab:metadata}
\end{table}
From Table~\ref{tab:metadata}, we observe that (1) \textit{Fine-grained understanding is poor:} title recovery stays near 0\% and full-category accuracy remains below 7.2\% on all datasets, so item-level semantics are largely unrecovered. (2) \textit{Coarse category signal is strong:} 1-level category accuracy reaches highest 99.6\%
and 2-level accuracy up to 40.5\%, indicating that CPT captures broad categorical structure. These results show that LLMs associate SIDs with semantics from pretraining, but only at a coarse level. We next test whether CoT can convert this signal into better SID prediction.


\subsection{CoT SFT: The Failure of Explicit CoT}
\label{sec:cot_failure}

Here, we investigate if CoT SFT improves GR. 

\noindent\textbf{Experimental Design.}
We perform CoT SFT on Qwen3-1.7B~\cite{qwen3technicalreport} after CPT and SFT on Amazon Beauty~\cite{ni2019justifying}, using template-based, teacher-generated, rejection-sampled, and format-restricted rationales. For template-based rationales, we use: (1) \underline{Template-Category}: ``The user is likely to buy items in the \{target item category\} category.'' and (2) \underline{Template-Extended}: ``The user demonstrates interest in \{frequent categories\} products. By identifying the user's preference in \{characteristics\}, we can predict the user's purchase of \{target categories\}, for example, a \{item title\}.'' For teacher-generated rationales, we use Gemini 3.1 Flash-Lite and Pro~\cite{team2023gemini} to produce \underline{free-form} traces. For rejection sampling, Gemini 3.1 Flash-Lite generates multiple traces per sample, and we select either the trace with the highest target-SID logits (\underline{Gemini 3.1 Flash-Lite Rejection}) or the trace Gemini 3.1 Pro judges to best connect the user history to the target item (\underline{Gemini 3.1 FL Gemini Rejection}). Finally, \underline{format-restricted} traces impose reasoning constraints, e.g., rationales must reference SIDs. Sample rationales and prompts are in Appendix~\ref{app:implementation_details}.

\noindent\textbf{Results and Analysis.}
\begin{table}[t]
\centering
\small
\resizebox{\columnwidth}{!}{%
\begin{tabular}{lcc}
\toprule
\textbf{Method} & \textbf{Hit@5} & \textbf{NDCG@5} \\
\midrule
Next-item SFT (Baseline) & 0.0533 & 0.0381  \\
\midrule
Template-Category & 0.0492 & 0.0343 \\
Template-Extended &  0.0425 & 0.0282  \\
\midrule
Gemini 3.1 Flash-Lite Free-form & 0.0520 & 0.0361 \\
Gemini 3.1 Pro Free-form & 0.0422 & 0.0276 \\
Gemini 3.1 Flash-Lite Rejection & 0.0526 & 0.0362 \\
Gemini 3.1 FL Gemini Rejection & 0.0524 & 0.0367 \\
Gemini 3.1 Flash-Lite Restricted & 0.0449 & 0.0307  \\
\bottomrule
\end{tabular}}
\caption{CoT SFT variants on Amazon Beauty. Explicit rationales fail to outperform simple next-item SFT across rationale variants, indicating that rationale supervision alone does not reliably improve GR.}
\label{tab:cot_results}
\end{table}
Table~\ref{tab:cot_results} shows that explicit CoT variants underperform next-item SFT. The strongest variant, Gemini rejection sampling, remains below the baseline (0.0524 vs.\ 0.0533 Hit@5), while weaker teacher-generated variants lose over 20\% relative Hit@5, so CoT SFT alone does not reliably improve SID prediction. This pattern contrasts with CoT's success on language tasks: prior LLM-based GR work that reports gains from in-text reasoning relies on expensive RL with verifiable rewards (RLVR) after CoT SFT~\citep{liu2025onerec,yu2025thinkrec}. While RLVR can recover performance, it requires multiple rollout trajectories per step and is substantially more expensive than next-item SFT, which raises the question: \textit{why does CoT SFT fail for SID-based GR?}


\section{Diagnosis of CoT SFT Limitations}
\label{sec:diagnosis}

To understand why explicit CoT SFT fails, we conduct diagnostic studies and identify three limitations of the CoT SFT stage.

\subsection{Difficulty Verbalizing World Knowledge}

\noindent \textbf{Finding.} CoT SFT does not erase LLMs' world knowledge, but makes them difficult to verbalize.  

\noindent\textbf{Experimental Design.} We evaluate Qwen3-1.7B~\cite{qwen3technicalreport} after CoT SFT on representative language tasks benchmarks MMLU~\citep{hendrycks2020measuring}, HellaSwag~\citep{zellers2019hellaswag}, PIQA~\citep{bisk2020piqa}, and ARC-Challenge~\citep{clark2018think} in multiple-choice format. We report \textit{text-match} accuracy (exact A/B/C/D generation) and \textit{logit-based} accuracy (whether the correct choice has the highest logit).

\begin{table}[t]
\centering
\small
\resizebox{\columnwidth}{!}{%
\begin{tabular}{lccc}
\toprule
\textbf{Dataset} & \textbf{Base} & \textbf{Text Match} & \textbf{Logit-based} \\
\midrule
MMLU & 57.60 & 0.10 & 55.20 \\
HellaSwag & 61.00 & 9.00 & 60.40 \\
PIQA & 69.90 & 5.60 & 69.90 \\
ARC-C & 76.40 & 0.80 & 73.80 \\
\bottomrule
\end{tabular}
}
\caption{General-language reasoning accuracy after recommendation CoT SFT. Text-match accuracy collapses while logit-based accuracy remains close to the base model, suggesting that answer information is still present in logits but is no longer reliably verbalized.}
\label{tab:forgetting}
\end{table}

\noindent\textbf{Results and Analysis.} Table~\ref{tab:forgetting} shows text-match accuracy degrades after CoT SFT, while logit-based accuracy remains close to the base model on all benchmarks. 
It shows that the LLM's world knowledge remains primarily in logit space and is hard to verbalize in explicit natural language text format. We next examine whether this text--SID interface mismatch is also in the token embedding space.

\subsection{Text--SID Embedding Misalignment}
\noindent\textbf{Finding.} SID and natural-language tokens become geometrically separated in the token embedding space, causing difficulty for LLM to unify text and SIDs under a coherent rationale (see results and analysis for specific reasons).

\noindent\textbf{Experimental Design.} We visualize token embeddings after SID initialization, CPT, SFT, and CoT SFT using PCA~\cite{jolliffe2025principal}, comparing ordinary text tokens with SID tokens.

\begin{figure}[t]
  \centering
  \includegraphics[width=\columnwidth]{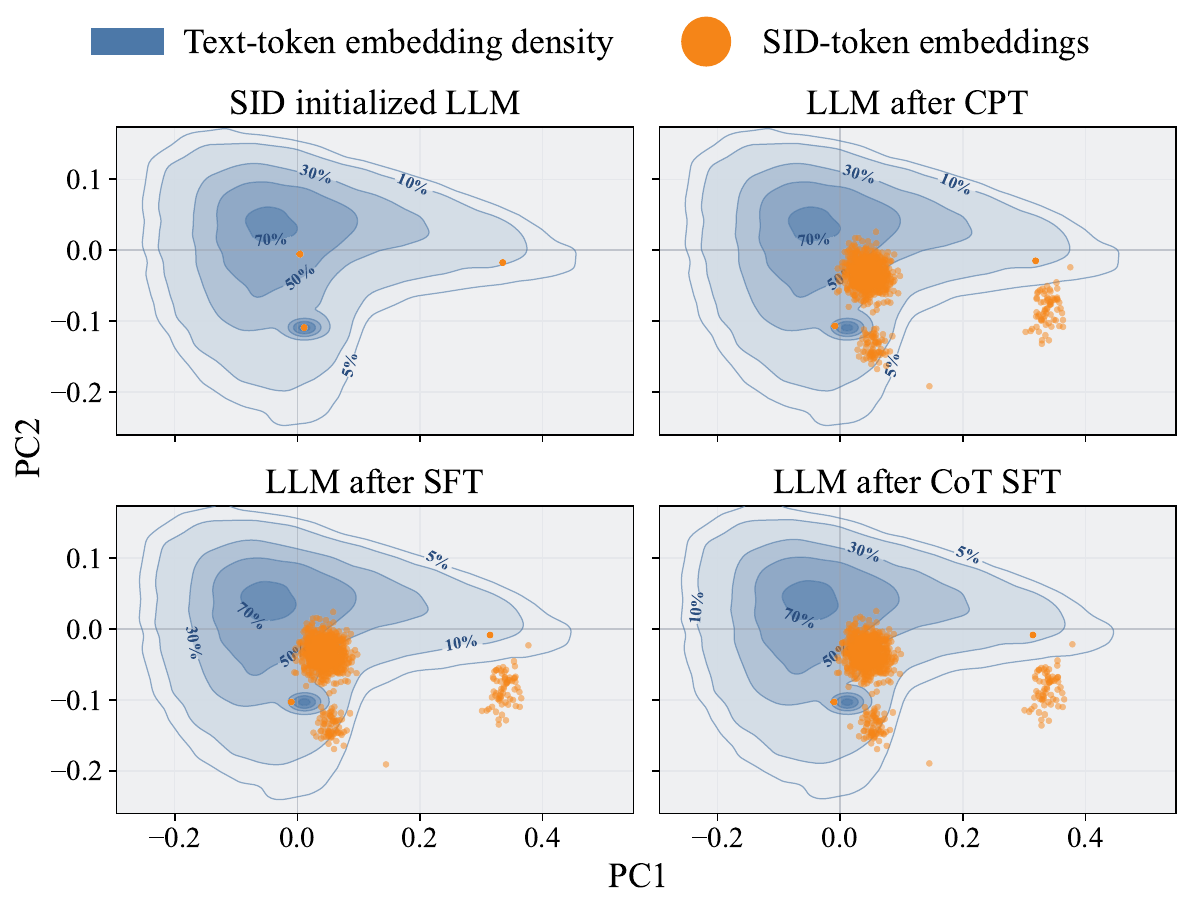}
  \caption{PCA of text and SID token embeddings across training stages. SID tokens drift away from ordinary text tokens as training progresses, indicating limited embedding space overlap between text and SIDs.}
  \label{fig:embedding_pca}
\end{figure}

\noindent\textbf{Results and Analysis.} Figure~\ref{fig:embedding_pca} shows text and SID embeddings diverging across stages, and the gap is already pronounced after CPT and continue to slightly expand during SFT and CoT SFT.  This token embedding discrepancy suggests difficulty in unifying language and SIDs in one coherent representation. Specifically, Appendix~\ref{app:text_sid_theorem} proves that when text- and SID-induced hidden-state directions are weakly coupled, updates driven by natural-language rationales can only weakly shift the logits over SID tokens, so explicit CoT has limited leverage on the final recommendation. 

\subsection{Performance Fragility w.r.t. Rationales}

\noindent\textbf{Finding.} After CoT SFT, recommendation performance is highly sensitive to the rationale text at inference, even when generated reasoning only slightly deviates from the ground-truth rationale.

\noindent\textbf{Experimental Design.} We test CoT SFT models with ground-truth rationales and controlled perturbations---removing the target item category, randomly dropping five words, or randomly adding five noise words---and measure Hit@5 and NDCG@5 under each setting.

\begin{table}[t]
\centering
\small
\resizebox{0.83\columnwidth}{!}{%
\begin{tabular}{lcc}
\toprule
\textbf{Reasoning Variant} & \textbf{Hit@5} & \textbf{NDCG@5} \\
\midrule
Ground-truth & 0.1165 & 0.0836 \\
Remove category & 0.0540 & 0.0376 \\
Drop 5 words & 0.0950 & 0.0834 \\
Add 5 words & 0.1145 & 0.0682 \\
\bottomrule
\end{tabular}}
\caption{Rationale perturbation sensitivity on Amazon Beauty~\cite{ni2019justifying}. Removing the target category more than halves performance, and even small word-level perturbations affect accuracy, showing that explicit CoT relies on brittle rationale cues. 
}
\label{tab:fragility}
\end{table}

\noindent\textbf{Results and Analysis.} Table~\ref{tab:fragility} shows that performance is highly sensitive to rationale content. Removing the target category more than halves Hit@5 (0.1165 to 0.0540) and NDCG@5 (0.0836 to 0.0376). Surface perturbations also matter: dropping five words reduces Hit@5 by 18.5\%, while adding five noise words reduces NDCG@5 by 18.4\%. Explicit CoT thus depends on brittle rationale cues, especially whether the text preserves semantics needed for the target SID.

\subsection{Summary of Findings}
Our diagnostics expose three CoT failures: \textbf{weakened verbalization} leaves answer signals in logits but weakens decoding; \textbf{text--SID embedding misalignment} limits rationale effects on SID logits; and \textbf{fragile rationales} make metrics sensitive to small edits. This motivates \textit{implicit reasoning in latent space}: learned \texttt{<pause>} tokens bridge language to SIDs without decoding brittle intermediate natural language reasoning text.

\section{Methodology: \textsc{PauseRec}}
\label{sec:method}
\begin{figure*}
    \centering
    \includegraphics[width=\linewidth]{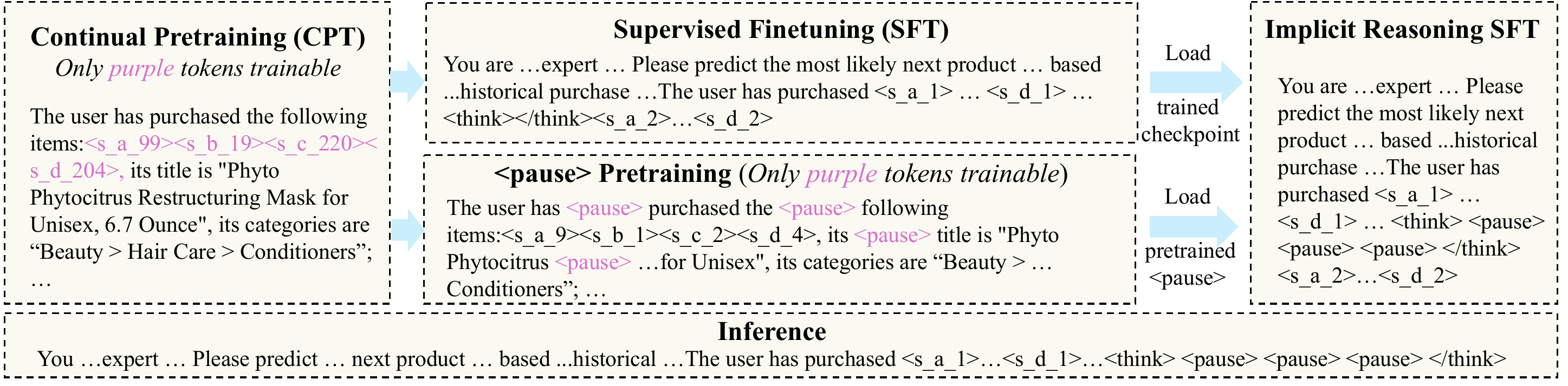}
    \caption{Overview of \textsc{PauseRec}. Instead of generating explicit rationales and applying RL post-training, \textsc{PauseRec} pretrains a \texttt{<pause>} token to bridge text and SID representations, then inserts pause tokens before SID generation and trains them only through the final next-item prediction loss.}
    \label{fig:pauserec_overview}
\end{figure*}
Motivated by Section~\ref{sec:diagnosis}, we propose \textsc{PauseRec}, an implicit reasoning method for LLM-based GR that keeps the CPT and next-item SFT stages of explicit pipelines but replaces CoT SFT and RL with pause-based latent computation. 

\subsection{Overview of \textsc{PauseRec}}

As illustrated in Fig.~\ref{fig:pauserec_overview}, we perform CPT (Section~\ref{sec: traditional_cot_rec}), then conduct next-item SFT and \texttt{<pause>} pretraining in parallel on the same CPT checkpoint. The SFT branch follows Section~\ref{sec: traditional_cot_rec}; the pause branch finetunes on the CPT corpus with \texttt{<pause>} tokens injected at random text positions so the token learns semantic transitions between language and SID tokens. We then load the pretrained \texttt{<pause>} embedding into the SFT checkpoint and run implicit reasoning SFT on next-item data with $k$ pauses inserted between user history and target SID, optimizing only SID positions.

\textsc{PauseRec} addresses the three CoT failures above: (1) \textbf{Computation without verbalization} via latent \texttt{<pause>} steps that need not be decoded as natural language; (2) \textbf{Bridging embedding spaces} via CPT-grounded pause pretraining (Appendix~\ref{app:pauserec_embedding_viz}  visualizes the trained \texttt{<pause>} token positioned between embedding spaces); and (3) \textbf{Avoiding rationale supervision} by masking loss on pause positions and optimizing only target SID.

\subsection{\texttt{<pause>} Token Initialization}

We add \texttt{<pause>} to the vocabulary and initialize its embedding at the mean of all token embeddings after CPT, with variance set to $10^{-9}$ times the embedding variance (equivalent to a near-deterministic start at the vocabulary center):
$
\mathbf{e}_{\texttt{<pause>}}^{(0)} = \frac{1}{|\mathcal{V}|} \sum_{v \in \mathcal{V}} \mathbf{e}_v
$, 
where $\mathcal{V}$ is the full vocabulary. This center initialization gives \texttt{<pause>} a neutral starting point between text and SIDs.

\subsection{Two-Stage Training}

\noindent\textbf{Stage 1: \texttt{<pause>} Token Pretraining.} Starting from the CPT checkpoint, we finetune on the CPT corpus with \texttt{<pause>} inserted at random positions covering 10\% of each sequence (Fig.~\ref{fig:pauserec_overview}). Only $\mathbf{e}_{\texttt{<pause>}}$ is trainable; all other parameters remain frozen. This concentrates updates on the bridge token while preserving grounded SID embeddings and the pretrained language backbone.

\noindent\textbf{Stage 2: Implicit Reasoning SFT.} We load the pretrained $\mathbf{e}_{\texttt{<pause>}}$ into the SFT checkpoint and append $k$ pauses between user history and target:
\begin{equation}
x' = \text{Prompt}(H) \| \underbrace{\texttt{<pause>}, \ldots, \texttt{<pause>}}_{k \text{ times}}
\end{equation}
The LLM is finetuned with loss masked at \texttt{<pause>} positions; only target SID tokens are optimized:
\begin{equation}
\vspace{-0.05in}
\mathcal{L}_{\text{implicit}} = -\sum_{l=1}^{L} \log p_\theta\left(s_{n+1}^{(l)} \mid x', s_{n+1}^{(1:l-1)}\right)
\vspace{-0.01in}
\end{equation}
By not imposing loss on pause positions, we avoid imitating a fixed teacher rationale distribution and instead let the model use pauses only when they improve SID prediction. In practice, pause slots act as task-specific latent scratch space between the textual history and discrete SID outputs.
See Appendix~\ref{app:pauserec_prompts} for sample prompts and formatted training text for each stage of \textsc{PauseRec}.

\subsection{Inference}
\begin{table*}[t]
    \centering
    \normalsize
    \setlength{\tabcolsep}{4pt}
    \resizebox{\textwidth}{!}{%
    \begin{tabular}{lcccccccccccc}
    \toprule
    & \multicolumn{4}{c}{\textbf{Beauty}} & \multicolumn{4}{c}{\textbf{Sports}} & \multicolumn{4}{c}{\textbf{Toys}} \\
    \cmidrule(lr){2-5} \cmidrule(lr){6-9} \cmidrule(lr){10-13}
    \textbf{Method} & H@5 & H@10 & N@5 & N@10 & H@5 & H@10 & N@5 & N@10 & H@5 & H@10 & N@5 & N@10 \\
    \midrule
    GRU4Rec & 0.0395 & 0.0584 & 0.0265 & 0.0326 & 0.0190 & 0.0312 & 0.0122 & 0.0161 & 0.0330 & 0.0490 & 0.0228 & 0.0279 \\
    SASRec & 0.0402 & 0.0607 & 0.0254 & 0.0320 & 0.0199 & 0.0301 & 0.0106 & 0.0141 & 0.0448 & 0.0626 & 0.0300 & 0.0358 \\
    BERT4Rec & 0.0232 & 0.0396 & 0.0146 & 0.0199 & 0.0102 & 0.0175 & 0.0065 & 0.0088 & 0.0215 & 0.0332 & 0.0131 & 0.0168 \\
    HGN & 0.0319 & 0.0536 & 0.0196 & 0.0266 & 0.0183 & 0.0313 & 0.0109 & 0.0150 & 0.0326 & 0.0517 & 0.0192 & 0.0254 \\
    HSTU & 0.0424 & 0.0652 & 0.0280 & 0.0353 & 0.0268 & 0.0343 & 0.0173 & 0.0226 & 0.0366 & 0.0566 & 0.0245 & 0.0309 \\
    TIGER & 0.0405 & 0.0623 & 0.0267 & 0.0337 & 0.0215 & 0.0347 & 0.0137 & 0.0179 & 0.0337 & 0.0547 & 0.0209 & 0.0276 \\
    \midrule
    ReaRec & 0.0450 & 0.0704 & 0.0262 & 0.0344 & 0.0214 & 0.0332 & 0.0116 & 0.0154 & 0.0523 & 0.0764 & 0.0298 & 0.0376 \\
    Next-item SFT & 0.0533 & 0.0733 & 0.0381 & 0.0445 & 0.0287 & 0.0409 & 0.0198 & 0.0237 & 0.0565 & 0.0781 & 0.0402 & 0.0471 \\
    OneRec-Think & \underline{0.0563} & \textbf{0.0791} & \underline{0.0398} & \textbf{0.0471} & \underline{0.0288} & \underline{0.0412} & \underline{0.0199} & \underline{0.0239} & \underline{0.0579} & \underline{0.0797} & \underline{0.0412} & \underline{0.0482} \\
    \midrule
    \textbf{\textsc{PauseRec}} & \textbf{0.0568} & \underline{0.0746} & \textbf{0.0401} & \underline{0.0467} & \textbf{0.0294} & \textbf{0.0422} & \textbf{0.0203} & \textbf{0.0245} & \textbf{0.0615} & \textbf{0.0838} & \textbf{0.0434} & \textbf{0.0509} \\
    \bottomrule
    \end{tabular}
    }
    \caption{Main recommendation results on three Amazon datasets. \textsc{PauseRec} consistently improves over next-item SFT and exceeds the baselines on most metrics; boldface and underlining mark the best and second-best results.}
    \label{tab:main_results}
    \end{table*}
At test time, we use the same prompt template as implicit-reasoning SFT (Appendix~\ref{app:pauserec_prompts}), insert $k$ literal \texttt{<pause>} tokens between the \texttt{<think>} and \texttt{</think>} tags before the SID output, and autoregressively decode the next SID. No rationale text is generated at inference, which removes the token overhead of explicit CoT while preserving a dedicated computation window before SID prediction.

\section{Experiments}
\label{sec:experiments}

\subsection{Experimental Setup}

\noindent\textbf{Datasets.} We evaluate on three Amazon review datasets~\citep{ni2019justifying}: Beauty, Sports and Outdoors, and Toys and Games. Following~\citep{liu2025onerec}, we filter users and items with fewer than five interactions and use a leave-last-out split: the final item is held out for testing, the second-to-last for validation, and the third-to-last as the training target with all earlier interactions as input.

\noindent\textbf{Baselines.} We compare (1) \textit{traditional sequential recommenders}: GRU4Rec~\citep{hidasi2015session}, SASRec~\citep{kang2018self}, BERT4Rec~\citep{sun2019bert4rec}, and HGN~\citep{ma2019hierarchical}; (2) \textit{generative retrieval models}: HSTU~\citep{zhai2024actions} and TIGER~\citep{rajput2023recommender}; (3) \textit{LLM-based models}: next-item SFT (our reproduction) and OneRec-Think~\citep{liu2025onerec} (explicit CoT with RLVR); and (4) \textit{implicit reasoning}: ReaRec~\citep{tang2024rearec}.

\noindent\textbf{Metrics and Implementation.} We report Hit@5, Hit@10, NDCG@5, and NDCG@10~\citep{rajput2023recommender}. The backbone is Qwen3-1.7B~\citep{qwen3technicalreport}. CPT runs for 3 epochs (lr $10^{-4}$), pause pretraining for 2 epochs (lr $10^{-3}$), and implicit SFT for 5 epochs (lr $5\times10^{-5}$) with AdamW (wd 0.01). Main results use $k{=}5$ pauses. Training and evaluation prompts match the templates in Appendix~\ref{app:pauserec_prompts}.

\subsection{Effectiveness \& Efficiency}

\begin{table}[t]
\centering
\small
\begin{tabular}{lcc}
\hline
 & \textbf{Train (GPU hours)} & \textbf{Inference (s)} \\
\hline
PauseRec & 107.34 &   0.0586 ± 0.0093\\
OneRec-Think & 305.62 & 0.2043 ± 0.0255 \\
\hline
\end{tabular}
\caption{Efficiency comparison on Qwen3-1.7B with Amazon Beauty. By avoiding RL post-training and natural-language rationale generation, \textsc{PauseRec} uses about 65\% fewer training GPU hours and is roughly 3.5$\times$ faster per inference sample than OneRec-Think.}
\label{tab:time_comparison}
\end{table}
We evaluate the effectiveness and efficiency of \textsc{PauseRec}. From Table~\ref{tab:main_results}, we observe that (1) \textbf{Consistent gains over next-item SFT:} \textsc{PauseRec} improves every metric over the next-item SFT baseline, with relative gains up to 8.85\% on Toys Hit@5. (2) \textbf{Competitive with or better than RL-based CoT:} \textsc{PauseRec} outperforms OneRec-Think on 10 of 12 metrics, including all Sports and Toys metrics, with up to 6.22\% relative improvement on Toys Hit@5; OneRec-Think remains higher on Beauty Hit@10 and NDCG@10. (3) \textbf{Substantial gains over non-LLM recommenders:} \textsc{PauseRec} consistently outperforms all baselines, highlighting the value of LLM knowledge through a SID-compatible interface.

From Table~\ref{tab:time_comparison}, \textsc{PauseRec} reduces training GPU hours by 65\% and inference latency by roughly 3.5$\times$ on Beauty by avoiding RL post-training and rationale generation. OneRec-Think uses relatively short template rationales here; inference savings grow with  generated tokens (Appendix~\ref{app:cot_sft_inference_speed}).
\subsection{Ablation Studies}

We compare our CPT-grounded pause pretraining with alternative initializations before implicit SFT: the mean of text embeddings only, the mean of SID embeddings only, and the default special-token initialization.
\begin{table}[t]
    \centering
    \small
    \begin{tabular}{lcc}
    \toprule
    \textbf{Initialization} & \textbf{Hit@5} & \textbf{NDCG@5} \\
    \midrule
    Center of text tokens only & 0.0559 & 0.0395 \\
    Center of SID tokens only & 0.0548 & 0.0387 \\
    Default Initialization & 0.0560 & 0.0394 \\
    \midrule
    \textbf{Pretrained (Ours)} & \textbf{0.0568} & \textbf{0.0401} \\
    \bottomrule
    \end{tabular}
    \caption{\texttt{<pause>} initialization ablation on Beauty. Pretraining the pause token to bridge text and SID contexts gives the best Hit@5 and NDCG@5, outperforming text-only, SID-only, and default initializations.}
    \label{tab:ablation_init}
\end{table}
Table~\ref{tab:ablation_init} shows that the pretrained \texttt{<pause>} token performs best, with modest but consistent gains over text-only, SID-only, and default initializations. This supports pause pretraining for bridging text and SID embedding spaces.

\subsection{Parameter Analysis}

We analyze the effect of pause count $k$. All settings share the same pause pretraining; we append $k$ pauses during implicit SFT and use the same $k$ at inference.
\begin{figure}[t]
  \centering
  \includegraphics[width=1\columnwidth]{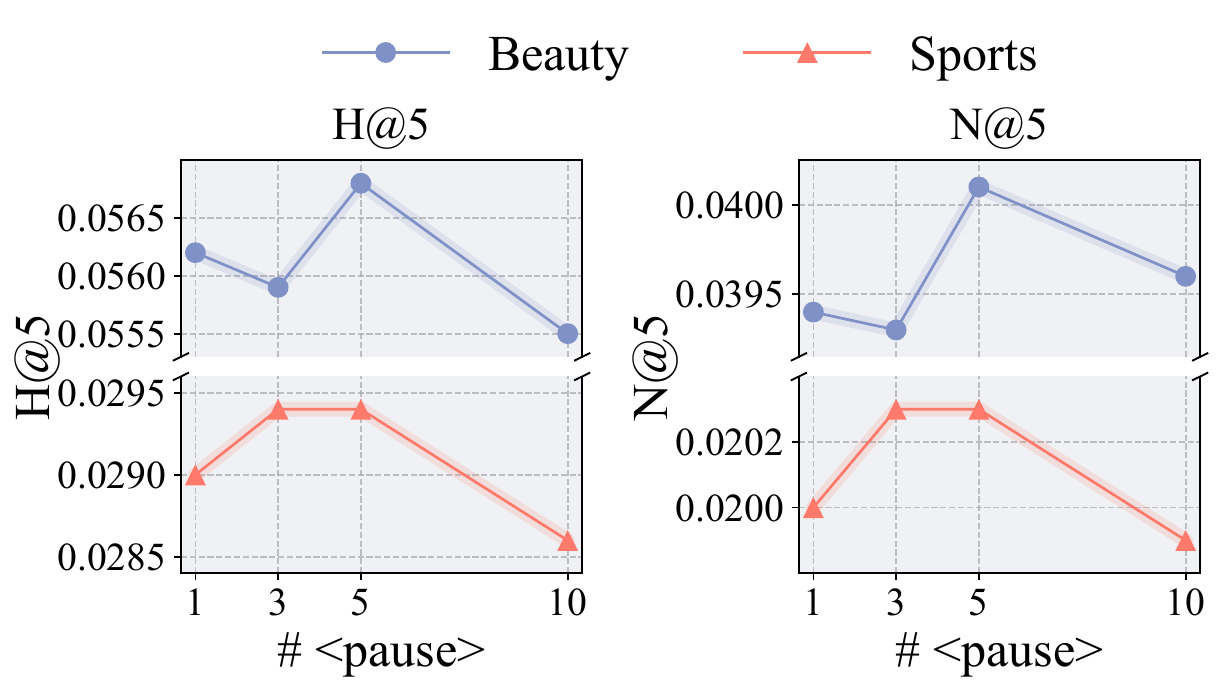}
  \caption{Effect of the number of \texttt{<pause>} tokens. Moderate latent computation works best ($k{=}5$), while further increasing $k$ provides no performance improvement.}
  \label{fig:num_pauses}
\end{figure}
Figure~\ref{fig:num_pauses} shows moderate values of $k$ work best ($k{=}5$); increasing to $k{=}10$ does not consistently help (Table~\ref{tab:pauserec_pause_tokens}), suggesting useful latent computation saturates after some pause steps.

\subsection{Qualitative Analysis}
\label{sec:analysis}

To understand how \textsc{PauseRec} uses latent pause computation during inference, we analyze where each \texttt{<pause>} token in the reasoning block attends in the surrounding context. For each pause token, we average its outgoing attention to context tokens across all layers and heads, and visualize how that distribution changes relative to the preceding pause token (Fig.~\ref{fig:attention}).
\begin{figure}[t]
  \centering
  \includegraphics[width=\columnwidth]{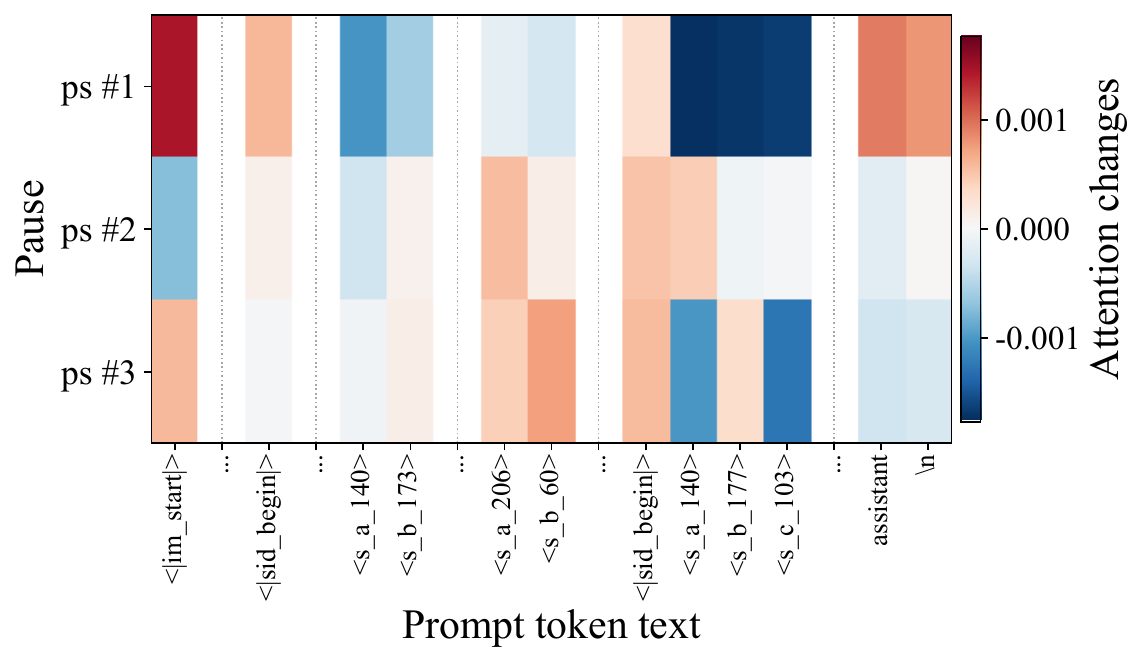}
  \caption{Attention changes across \texttt{<pause>} positions for a representative recommendation. Early pause tokens attend broadly to the prompt and history boundary, while later pause tokens focus on a smaller set of historical SID tokens; red and blue indicate increased and decreased attention after each pause.}
  \label{fig:attention}
\end{figure}
From Fig.~\ref{fig:attention} (see full prompt in Appendix~\ref{app:fig5_prompt}), we observe a multi-stage process. (1) \textbf{Context orientation:} early pauses attend broadly to the instruction and history boundary, establishing that the next SID should be inferred from purchase history. 
(2) \textbf{Preference aggregation:}
middle and later pauses shift toward historical SIDs, identifying purchases relevant to user intent. With pause position proceeds, the LLM focuses on a small salient subset of SIDs, locating items similar to target item. This staged transition explains why latent pause computation improves GR.

\section{Related Work}
\label{sec:related}



\noindent\textbf{LLM-based GR.}
Recent work uses LLMs as rankers or feature extractors~\citep{hou2023large} and as generative recommenders that output SIDs~\citep{rajput2024recommender, hua2023up5}. These pipelines typically use CPT on item-text corpora to ground SIDs~\citep{bao2023tallrec}, then apply next-item SFT. Our work builds on this foundation and asks when pretrained world knowledge improves SID prediction beyond standard training.

\noindent\textbf{Reasoning in LLMs.} Chain-of-Thought prompting improves reasoning on language tasks such as math~\citep{wei2022chain} and science~\citep{lewkowycz2022solving}, but SID-based GR involves non-linguistic outputs. Recent GR systems add CoT SFT and RL on top of CPT~\citep{liu2025onerec, yu2025thinkrec, liang2026generative}; our stage-wise analysis clarifies when those additions help. Implicit reasoning via latent tokens appears in quiet CoT~\citep{zelikman2024quiet} and ReaRec~\citep{tang2024rearec}; to our knowledge, we provide the first systematic explicit-vs-implicit comparison for LLM-based GR with an analysis of CoT SFT failure.

\section{Conclusion}
\label{sec:conclusion}

This paper shows that explicit rationales are a poor interface for SID-based generative recommendation: LLMs retain useful signals, but weakened verbalization, text--SID embedding mismatch, and rationale sensitivity limit CoT SFT. \textsc{PauseRec} replaces rationales with trainable \texttt{<pause>} tokens, enabling latent reasoning that bridges language and SIDs. Extensive experiments show that \textsc{PauseRec} is effective and efficient.
\clearpage
\bibliography{main}
\newpage
\clearpage
\appendix
\section{Limitations and Potential Risks}
\textsc{PauseRec} leaves several natural extensions for future work. First, our study uses a compact pause-token design and reports sensitivity to pause length, but does not exhaustively tune all possible pause placements, initialization schedules, or decoding variants. Second, our evaluation follows the standard offline next-item prediction protocol; complementary user-facing studies could further examine how latent reasoning affects perceived usefulness, diversity, and recommendation presentation. Finally, because implicit \texttt{<pause>} tokens are not natural-language rationales, their intermediate computation is less directly readable by users, motivating additional probing and visualization tools for analyzing how pause tokens support SID prediction.
Like other recommender systems, \textsc{PauseRec} may amplify popularity bias, reinforce historical user preferences too strongly, or inherit biases present in the interaction data and pretrained LLM. Deployments should include appropriate mitigations.

\section{AI Usage}
AI writing assistance was used only to polish grammar, clarity, and sentence flow. All research ideas, experimental designs, analyses, results, and final claims were developed, checked, and approved by the authors.

\section{Artifacts}
\label{app:artifacts}
Our artifact release includes the code and scripts for constructing SID-based training data, running the four \textsc{PauseRec} training stages, evaluating constrained SID decoding, and generating the tables and figures reported in the paper. The implementation is based on the open-source OneRec-Think repository~\citep{onerecthinkgithub}; we extend it with pause-token pretraining, implicit-reasoning finetuning, evaluation utilities, and experiment orchestration for \textsc{PauseRec}. OneRec-Think is released under the Apache License 2.0, which permits reuse and modification. Our code is released under the MIT License.

\section{Theoretical Analysis of Text--SID Separation}
\label{app:text_sid_theorem}
We formalize why geometric separation between natural-language and SID representations can weaken explicit CoT. Consider the final step after a rationale, where the model must generate a SID token. Let $v_s$ denote the output embedding of SID token $s$, and let the SID logit be
\[
z_s(h)=v_s^\top h,
\]
where $h$ is the hidden state before SID generation. Let $\mathcal{U}_{\text{text}}$ be the subspace in which hidden states move when the model generates or is optimized on natural-language rationale tokens, and let
\[
\mathcal{U}_{\text{SID}}=\mathrm{span}\{v_y-v_s: y,s\in\mathcal{S}\}
\]
be the subspace that controls relative SID logits. Define the text--SID coupling coefficient
\[
\rho = \|P_{\mathcal{U}_{\text{SID}}}P_{\mathcal{U}_{\text{text}}}\|_2,
\]
where $P_{\mathcal{U}}$ is the orthogonal projection onto subspace $\mathcal{U}$. Smaller $\rho$ means stronger separation between text-induced hidden-state movement and SID-discriminative directions.

\begin{theorem}[Text--SID separation bounds the effect of verbal rationales]
Suppose adding a natural-language rationale changes the hidden state before SID generation from $h$ to $h+\Delta$, where
\[
\Delta = \Delta_{\text{text}} + r,\quad
\Delta_{\text{text}}\in\mathcal{U}_{\text{text}},\quad
\|r\|\le \epsilon.
\]
Assume $\|v_y-v_s\|\le B$ for all valid SID tokens $y,s\in\mathcal{S}$. Let
\[
\rho = \|P_{\mathcal{U}_{\mathrm{SID}}}P_{\mathcal{U}_{\text{text}}}\|_2.
\]
Then for any target SID token $y$ and competing SID token $s$,
\begin{equation}
\begin{aligned}
    &\left|
\bigl(z_y(h+\Delta)-z_s(h+\Delta)\bigr)
-
\bigl(z_y(h)-z_s(h)\bigr)
\right|\\
&\le
B(\rho\|\Delta_{\text{text}}\|+\epsilon).
\end{aligned}
\end{equation}

Consequently, if the target SID initially trails some competitor by margin $\gamma$,
\[
z_y(h)-z_s(h)\le -\gamma,
\]
and
\[
\gamma > B(\rho\|\Delta_{\text{text}}\|+\epsilon),
\]
then the rationale cannot make $y$ outrank $s$.
\end{theorem}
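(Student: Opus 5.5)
The plan is to use the linearity of the logits in $h$, so that the change in the relative logit is just an inner product with $\Delta$. Fix a target SID token $y$ and a competitor $s$, and set $w = v_y - v_s$. Since $z_y(h)-z_s(h) = w^\top h$, the quantity to bound is exactly
\[
\bigl(z_y(h+\Delta)-z_s(h+\Delta)\bigr)-\bigl(z_y(h)-z_s(h)\bigr) = w^\top \Delta = w^\top \Delta_{\text{text}} + w^\top r .
\]
I will bound the two terms separately and combine them with the triangle inequality.

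The residual term is immediate. By Cauchy--Schwarz and the assumption $\|v_y-v_s\|\le B$, we get $|w^\top r|\le \|w\|\,\|r\|\le B\epsilon$.

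The text term is the only place where $\rho$ enters, and it needs some care. By the definition of $\mathcal{U}_{\mathrm{SID}}$, we have $w\in\mathcal{U}_{\mathrm{SID}}$, hence $w = P_{\mathcal{U}_{\mathrm{SID}}}w$. Likewise $\Delta_{\text{text}}\in\mathcal{U}_{\text{text}}$ gives $\Delta_{\text{text}} = P_{\mathcal{U}_{\text{text}}}\Delta_{\text{text}}$. Orthogonal projections are self-adjoint, so
\[
w^\top \Delta_{\text{text}} = (P_{\mathcal{U}_{\mathrm{SID}}}w)^\top P_{\mathcal{U}_{\text{text}}}\Delta_{\text{text}} = w^\top P_{\mathcal{U}_{\mathrm{SID}}}P_{\mathcal{U}_{\text{text}}}\Delta_{\text{text}} .
\]
Cauchy--Schwarz together with the definition of the operator norm then yields $|w^\top\Delta_{\text{text}}|\le \|w\|\,\rho\,\|\Delta_{\text{text}}\|\le B\rho\|\Delta_{\text{text}}\|$. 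Adding the two bounds gives the displayed inequality.

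For the consequence, I write the post-rationale margin as the old margin plus the change. This gives
\[
z_y(h+\Delta)-z_s(h+\Delta)\le -\gamma + B(\rho\|\Delta_{\text{text}}\|+\epsilon) < 0
\]
under the hypothesis $\gamma > B(\rho\|\Delta_{\text{text}}\|+\epsilon)$, so $y$ still strictly trails $s$.

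The main obstacle is not computational. It is making sure the projection identities are legitimate, namely that both subspaces are closed (automatic in finite dimensions) and that $w$ lies in $\mathcal{U}_{\mathrm{SID}}$ itself rather than only in its closure. This is why I will insert both projections explicitly and rely on self-adjointness, rather than trying to bound $\langle w,\Delta_{\text{text}}\rangle$ by an angle argument. That identity step is the one I would check most carefully.
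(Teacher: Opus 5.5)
Your proposal is correct and follows essentially the same route as the paper: you split $(v_y-v_s)^\top\Delta$ into the text and residual parts, bound the residual by Cauchy--Schwarz, bound the text part through $P_{\mathcal{U}_{\mathrm{SID}}}P_{\mathcal{U}_{\text{text}}}$ and the definition of $\rho$, and then add the change to the initial margin $-\gamma$. Your explicit use of $w\in\mathcal{U}_{\mathrm{SID}}$ and the self-adjointness of the projections is a more careful justification of the paper's informal remark that only the projection of $\Delta_{\text{text}}$ onto $\mathcal{U}_{\mathrm{SID}}$ affects relative SID logits.
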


\begin{proof}
Let the SID margin between target token $y$ and competing token $s$ be
\[
m(h)=z_y(h)-z_s(h).
\]
The change in this margin after adding the rationale is
\begin{equation}
\begin{aligned}
m(h+\Delta)-m(h)
=
\bigl(z_y(h+\Delta)-z_s(h+\Delta)\bigr)\\
-
\bigl(z_y(h)-z_s(h)\bigr).
\end{aligned}
\end{equation}
Since $z_s(h)=v_s^\top h$, we have
\[
m(h+\Delta)-m(h)
=
(v_y-v_s)^\top \Delta.
\]
Substituting $\Delta=\Delta_{\text{text}}+r$ gives
\[
|(v_y-v_s)^\top \Delta|
\le
|(v_y-v_s)^\top \Delta_{\text{text}}|
+
|(v_y-v_s)^\top r|.
\]
Because only the projection of $\Delta_{\text{text}}$ onto the SID-discriminative subspace can affect relative SID logits,
\[
|(v_y-v_s)^\top \Delta_{\text{text}}|
\le
\|v_y-v_s\|\cdot \|P_{\mathcal{U}_{\mathrm{SID}}}\Delta_{\text{text}}\|.
\]
Since $\Delta_{\text{text}}\in\mathcal{U}_{\text{text}}$,
\[
P_{\mathcal{U}_{\mathrm{SID}}}\Delta_{\text{text}}
=
P_{\mathcal{U}_{\mathrm{SID}}}P_{\mathcal{U}_{\text{text}}}\Delta_{\text{text}}.
\]
By the definition of $\rho$,
\[
\|P_{\mathcal{U}_{\mathrm{SID}}}P_{\mathcal{U}_{\text{text}}}\Delta_{\text{text}}\|
\le
\rho\|\Delta_{\text{text}}\|.
\]
Therefore,
\[
|(v_y-v_s)^\top \Delta_{\text{text}}|
\le
B\rho\|\Delta_{\text{text}}\|.
\]
For the residual term,
\[
|(v_y-v_s)^\top r|
\le
\|v_y-v_s\|\|r\|
\le
B\epsilon.
\]
Combining the two bounds yields
\[
|m(h+\Delta)-m(h)|
\le
B(\rho\|\Delta_{\text{text}}\|+\epsilon).
\]

It remains to show the ranking consequence. Define
\[
M = B(\rho\|\Delta_{\text{text}}\|+\epsilon).
\]
The bound above implies that the rationale can change the SID margin by at most $M$. If the target SID initially trails competitor $s$ by margin $\gamma$, then
\[
m(h)=z_y(h)-z_s(h)\le -\gamma.
\]
After adding the rationale,
\begin{align*}
m(h+\Delta)
&=
m(h)+\bigl(m(h+\Delta)-m(h)\bigr)\\
&\le
m(h)+M.
\end{align*}
Since $m(h)\le -\gamma$, we obtain
\[
m(h+\Delta)\le -\gamma+M.
\]
If $\gamma>M$, then
\[
m(h+\Delta)<0.
\]
Equivalently,
\[
z_y(h+\Delta)<z_s(h+\Delta).
\]
Thus, even after adding the rationale, the target SID token $y$ still receives a lower logit than the competing SID token $s$, so $y$ cannot outrank $s$.
\end{proof}


\section{\textsc{PauseRec} Embedding Visualization}
\label{app:pauserec_embedding_viz}

\begin{figure*}[t]
    \centering
    \includegraphics[width=0.9\linewidth]{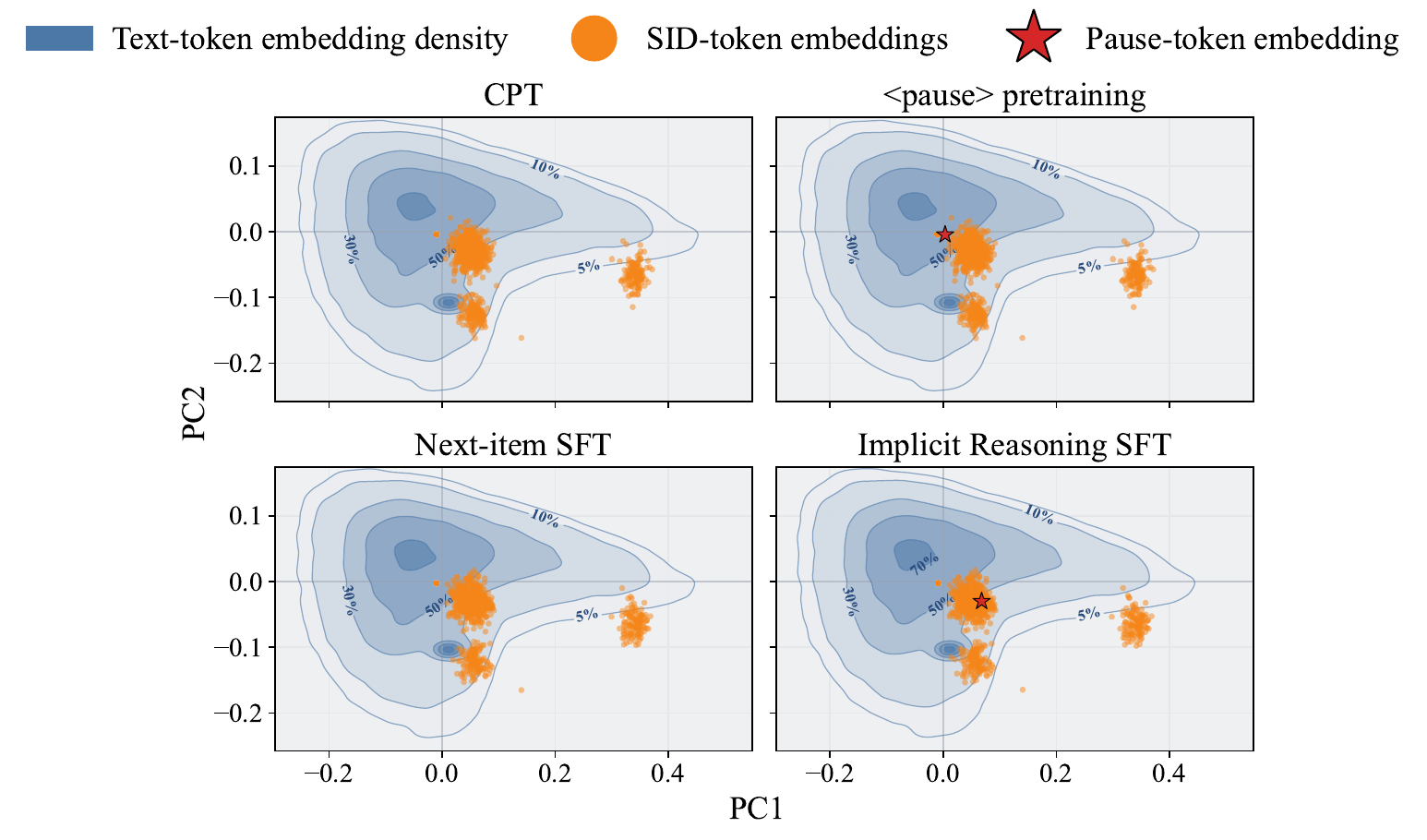}
    \caption{Token embeddings after each stage of the \textsc{PauseRec} pipeline. The learned \texttt{<pause>} token lies near the boundary between natural-language tokens and SID tokens, indicating that pause pretraining positions it as a bridge between the two embedding spaces. This supports the role of \texttt{<pause>} tokens in connecting semantic information from natural language to SID prediction.}
    \label{fig:pauserec_embedding_with_pause}
\end{figure*}

Figure~\ref{fig:pauserec_embedding_with_pause} visualizes token embeddings after each stage of the \textsc{PauseRec} pipeline, including CPT, pause-token pretraining, next-item SFT, and implicit-reasoning SFT. Across stages, the \texttt{<pause>} token stays at the boundary between the natural-language token cluster and the SID token cluster rather than collapsing into either group. This boundary placement provides empirical evidence that the pause token connects semantics across the two embedding spaces, helping route natural-language knowledge toward SID generation and thereby improving recommendation performance.

\section{Implementation Details}
\label{app:implementation_details}
\label{app:details}

\subsection{Sample Prompts for \textsc{PauseRec} Training and Inference}
\label{app:pauserec_prompts}

We provide concrete prompt text for each stage of \textsc{PauseRec} on Amazon Beauty, using the same leave-last-out training split as in Section~\ref{sec:experiments}. The example user has two items in the prediction history; the held-out target item is \emph{Raw African Black Soap from Ghana 1 Lb} (semantic ID shown in the implicit-reasoning blocks below). Chat-based stages use the fixed system instruction and chat turn delimiters shown in the full-sequence examples. During pause pretraining, \texttt{<pause>} tokens are inserted at random word boundaries. During implicit-reasoning SFT and inference, $k{=}5$ pause tokens are placed between the \texttt{<think>} and \texttt{</think>} tags before the target SID; loss is masked on those pause positions during SFT.

\noindent\textit{Continual pretraining (CPT).}
\begin{promptbox}
The user has purchased the following items:
<|sid\_begin|>\allowbreak{}<s\_a\_99>\allowbreak{}<s\_b\_19>\allowbreak{}<s\_c\_220>\allowbreak{}<s\_d\_204>\allowbreak{}<|sid\_end|>,
its title is "Phyto Phytocitrus Restructuring Mask
for Unisex, 6.7 Ounce", its categories are "Beauty >
Hair Care > Conditioners";
<|sid\_begin|>\allowbreak{}<s\_a\_238>\allowbreak{}<s\_b\_74>\allowbreak{}<s\_c\_13>\allowbreak{}<s\_d\_122>\allowbreak{}<|sid\_end|>,
its title is "Matrix Biolage Colorcaretherapie Color
Care Shampoo and Conditioner Set 33.8oz 1 Liter",
its categories are "Beauty > Hair Care > Shampoo \&
Conditioner Sets";
<|sid\_begin|>\allowbreak{}<s\_a\_226>\allowbreak{}<s\_b\_110>\allowbreak{}<s\_c\_129>\allowbreak{}<s\_d\_207>\allowbreak{}<|sid\_end|>,
its title is "Raw African Black Soap from Ghana 1
Lb", its categories are "Beauty > Bath \& Body >
Cleansers > Soaps";
\end{promptbox}

\noindent\textit{Pause-token pretraining (10\% random <pause> insertion; seed 42).}
\begin{promptbox}
The user has <pause> purchased the following items:
<|sid\_begin|>\allowbreak{}<s\_a\_99>\allowbreak{}<s\_b\_19>\allowbreak{}<s\_c\_220>\allowbreak{}<s\_d\_204>\allowbreak{}<|sid\_end|>,
its title is "Phyto <pause> <pause> Phytocitrus
Restructuring <pause> Mask for Unisex, 6.7 Ounce",
its categories are "Beauty > <pause> Hair Care
<pause> > Conditioners";
<|sid\_begin|>\allowbreak{}<s\_a\_238>\allowbreak{}<s\_b\_74>\allowbreak{}<s\_c\_13>\allowbreak{}<s\_d\_122>\allowbreak{}<|sid\_end|>,
<pause> its title is "Matrix Biolage
Colorcaretherapie Color Care Shampoo and Conditioner
Set 33.8oz 1 Liter", its categories are "Beauty >
Hair Care > Shampoo \& Conditioner Sets";
<|sid\_begin|>\allowbreak{}<s\_a\_226>\allowbreak{}<s\_b\_110>\allowbreak{}<s\_c\_129>\allowbreak{}<s\_d\_207>\allowbreak{}<|sid\_end|>,
its title is "Raw African Black Soap from Ghana 1
Lb", its categories are "Beauty > Bath <pause> \&
Body > Cleansers > Soaps";
\end{promptbox}

\noindent\textit{Next-item supervised finetuning: user prompt.}
\begin{promptbox}
The user has purchased the following items:
<|sid\_begin|>\allowbreak{}<s\_a\_99>\allowbreak{}<s\_b\_19>\allowbreak{}<s\_c\_220>\allowbreak{}<s\_d\_204>\allowbreak{}<|sid\_end|>;
<|sid\_begin|>\allowbreak{}<s\_a\_238>\allowbreak{}<s\_b\_74>\allowbreak{}<s\_c\_13>\allowbreak{}<s\_d\_122>\allowbreak{}<|sid\_end|>;
\end{promptbox}

\noindent\textit{Next-item supervised finetuning: full sequence (empty thinking block).}
\begin{promptbox}
<|im\_start|>system
You are a professional recommendation expert who
needs to recommend the next possible purchase for
users based on their purchase history. Please
predict the most likely next product that the user
will purchase based on the user's historical
purchase information.<|im\_end|>
<|im\_start|>user
The user has purchased the following items:
<|sid\_begin|>\allowbreak{}<s\_a\_99>\allowbreak{}<s\_b\_19>\allowbreak{}<s\_c\_220>\allowbreak{}<s\_d\_204>\allowbreak{}<|sid\_end|>;
<|sid\_begin|>\allowbreak{}<s\_a\_238>\allowbreak{}<s\_b\_74>\allowbreak{}<s\_c\_13>\allowbreak{}<s\_d\_122>\allowbreak{}<|sid\_end|>;<|im\_end|>
<|im\_start|>assistant
<think>

</think>
<|sid\_begin|>\allowbreak{}<s\_a\_226>\allowbreak{}<s\_b\_110>\allowbreak{}<s\_c\_129>\allowbreak{}<s\_d\_207>\allowbreak{}<|sid\_end|>\allowbreak{}<|im\_end|>
\end{promptbox}

\noindent\textit{Implicit-reasoning finetuning: user prompt and target SID.}
\begin{promptbox}
The user has purchased the following items:
<|sid\_begin|>\allowbreak{}<s\_a\_99>\allowbreak{}<s\_b\_19>\allowbreak{}<s\_c\_220>\allowbreak{}<s\_d\_204>\allowbreak{}<|sid\_end|>,
its title is "Phyto Phytocitrus Restructuring Mask
for Unisex, 6.7 Ounce", its categories are "Beauty >
Hair Care > Conditioners";
<|sid\_begin|>\allowbreak{}<s\_a\_238>\allowbreak{}<s\_b\_74>\allowbreak{}<s\_c\_13>\allowbreak{}<s\_d\_122>\allowbreak{}<|sid\_end|>,
its title is "Matrix Biolage Colorcaretherapie Color
Care Shampoo and Conditioner Set 33.8oz 1 Liter",
its categories are "Beauty > Hair Care > Shampoo \&
Conditioner Sets";
Target SID:
<|sid\_begin|>\allowbreak{}<s\_a\_226>\allowbreak{}<s\_b\_110>\allowbreak{}<s\_c\_129>\allowbreak{}<s\_d\_207>\allowbreak{}<|sid\_end|>
\end{promptbox}

\noindent\textit{Implicit-reasoning finetuning: full sequence ($k{=}5$ pause tokens).}
\begin{promptbox}
<|im\_start|>system
You are a professional recommendation expert who
needs to recommend the next possible purchase for
users based on their purchase history. Please
predict the most likely next product that the user
will purchase based on the user's historical
purchase information.<|im\_end|>
<|im\_start|>user
The user has purchased the following items:
<|sid\_begin|>\allowbreak{}<s\_a\_99>\allowbreak{}<s\_b\_19>\allowbreak{}<s\_c\_220>\allowbreak{}<s\_d\_204>\allowbreak{}<|sid\_end|>,
its title is "Phyto Phytocitrus Restructuring Mask
for Unisex, 6.7 Ounce", its categories are "Beauty >
Hair Care > Conditioners";
<|sid\_begin|>\allowbreak{}<s\_a\_238>\allowbreak{}<s\_b\_74>\allowbreak{}<s\_c\_13>\allowbreak{}<s\_d\_122>\allowbreak{}<|sid\_end|>,
its title is "Matrix Biolage Colorcaretherapie Color
Care Shampoo and Conditioner Set 33.8oz 1 Liter",
its categories are "Beauty > Hair Care > Shampoo \&
Conditioner Sets";<|im\_end|>
<|im\_start|>assistant
<think>
<pause>\allowbreak{}<pause>\allowbreak{}<pause>\allowbreak{}<pause>\allowbreak{}<pause>
</think>
<|sid\_begin|>\allowbreak{}<s\_a\_226>\allowbreak{}<s\_b\_110>\allowbreak{}<s\_c\_129>\allowbreak{}<s\_d\_207>\allowbreak{}<|sid\_end|>\allowbreak{}<|im\_end|>
\end{promptbox}

\noindent\textit{Inference: prompt prefix before constrained SID decoding.}
\begin{promptbox}
<|im\_start|>system
You are a professional recommendation expert who
needs to recommend the next possible purchase for
users based on their purchase history. Please
predict the most likely next product that the user
will purchase based on the user's historical
purchase information.<|im\_end|>
<|im\_start|>user
The user has purchased the following items:
<|sid\_begin|>\allowbreak{}<s\_a\_99>\allowbreak{}<s\_b\_19>\allowbreak{}<s\_c\_220>\allowbreak{}<s\_d\_204>\allowbreak{}<|sid\_end|>,
its title is "Phyto Phytocitrus Restructuring Mask
for Unisex, 6.7 Ounce", its categories are "Beauty >
Hair Care > Conditioners";
<|sid\_begin|>\allowbreak{}<s\_a\_238>\allowbreak{}<s\_b\_74>\allowbreak{}<s\_c\_13>\allowbreak{}<s\_d\_122>\allowbreak{}<|sid\_end|>,
its title is "Matrix Biolage Colorcaretherapie Color
Care Shampoo and Conditioner Set 33.8oz 1 Liter",
its categories are "Beauty > Hair Care > Shampoo \&
Conditioner Sets";<|im\_end|>
<|im\_start|>assistant
<think>
<pause>\allowbreak{}<pause>\allowbreak{}<pause>\allowbreak{}<pause>\allowbreak{}<pause>
</think>
\end{promptbox}

\subsection{Prompt for Qualitative Attention Analysis}
\label{app:fig5_prompt}

Figure~\ref{fig:attention} visualizes the pause-token attention pattern for the following example. The target item is a Dove hair styling spray, and the history contains multiple hair-care and styling products. In the later pause steps, attention to the history item whose SID begins with <s\_a\_206>\allowbreak{}<s\_b\_60>, \emph{Natures Bounty Optimal Solutions Hair, Skin and Nails Gummies}, increases. This item is related to the target through hair-care intent, so the increase supports the main-paper analysis that later pauses retrieve and aggregate target-relevant historical evidence before SID generation.

Target Item
Title: Dove Hair Styling Oxygen Moisture Root Lift Spray, 3.3 Ounce
SID: <|sid\_begin|>\allowbreak{}<s\_a\_140>\allowbreak{}<s\_b\_39>\allowbreak{}<s\_c\_151>\allowbreak{}<s\_d\_68>\allowbreak{}<|sid\_end|>
Categories: Beauty > Hair Care > Styling Products > Hair Sprays

\noindent\textit{Full prompt text.}
\begin{promptbox}
<|im\_start|>system
You are a professional recommendation expert who needs to recommend the next
possible purchase for users based on their purchase history. Please predict the
most likely next product that the user will purchase based on the user's
historical purchase information.<|im\_end|>
<|im\_start|>user
The user has purchased the following items:
<|sid\_begin|>\allowbreak{}<s\_a\_6>\allowbreak{}<s\_b\_192>\allowbreak{}<s\_c\_205>\allowbreak{}<s\_d\_33>\allowbreak{}<|sid\_end|>,
its title is "Axe Primed Just Clean Shampoo, 12-Ounce Bottle (Pack of 3)",
its categories are "Beauty > Hair Care > Shampoos";
<|sid\_begin|>\allowbreak{}<s\_a\_248>\allowbreak{}<s\_b\_8>\allowbreak{}<s\_c\_99>\allowbreak{}<s\_d\_150>\allowbreak{}<|sid\_end|>,
its title is "Olay Regenerist Micro-Sculpting Serum 1.7 Fl Oz",
its categories are "Beauty > Skin Care > Face";
<|sid\_begin|>\allowbreak{}<s\_a\_113>\allowbreak{}<s\_b\_56>\allowbreak{}<s\_c\_77>\allowbreak{}<s\_d\_2>\allowbreak{}<|sid\_end|>,
its title is "Clearasil Ultra Acne Treatment Daily Face Wash, 6.78 Ounce (Pack of 3)",
its categories are "Beauty > Skin Care > Face > Cleansers > Washes";
<|sid\_begin|>\allowbreak{}<s\_a\_6>\allowbreak{}<s\_b\_6>\allowbreak{}<s\_c\_17>\allowbreak{}<s\_d\_210>\allowbreak{}<|sid\_end|>,
its title is "Pantene Pro-V Expert Collection Agedefy Conditioner 8.4 Fl Oz",
its categories are "Beauty > Hair Care > Conditioners";
<|sid\_begin|>\allowbreak{}<s\_a\_6>\allowbreak{}<s\_b\_222>\allowbreak{}<s\_c\_222>\allowbreak{}<s\_d\_71>\allowbreak{}<|sid\_end|>,
its title is "Pantene Pro-V Expert Collection Agedefy Shampoo 10.1 Fl Oz",
its categories are "Beauty > Hair Care > Shampoos";
<|sid\_begin|>\allowbreak{}<s\_a\_255>\allowbreak{}<s\_b\_71>\allowbreak{}<s\_c\_242>\allowbreak{}<s\_d\_99>\allowbreak{}<|sid\_end|>,
its title is "Burt's Bees Lip Gloss, Autumn Haze, 0.2 Fluid Ounces",
its categories are "Beauty > Makeup > Lips > Lip Glosses";
<|sid\_begin|>\allowbreak{}<s\_a\_155>\allowbreak{}<s\_b\_51>\allowbreak{}<s\_c\_96>\allowbreak{}<s\_d\_246>\allowbreak{}<|sid\_end|>,
its title is "Nexxus Youth Renewal Rejuvenating Shampoo, 13.5 Ounce",
its categories are "Beauty > Hair Care > Shampoos";
<|sid\_begin|>\allowbreak{}<s\_a\_248>\allowbreak{}<s\_b\_86>\allowbreak{}<s\_c\_54>\allowbreak{}<s\_d\_216>\allowbreak{}<|sid\_end|>,
its title is "Simple Protecting Light Moisturizer Spf 15, 4.2 Ounce",
its categories are "Beauty > Skin Care > Face > Creams \& Moisturizers >
Fluids \& Lotions > Lotions";
<|sid\_begin|>\allowbreak{}<s\_a\_113>\allowbreak{}<s\_b\_255>\allowbreak{}<s\_c\_31>\allowbreak{}<s\_d\_115>\allowbreak{}<|sid\_end|>,
its title is "Dove go fresh, Burst Body Wash, 24 Ounce (Pack of 2)",
its categories are "Beauty > Bath \& Body > Cleansers > Body Washes";
<|sid\_begin|>\allowbreak{}<s\_a\_140>\allowbreak{}<s\_b\_254>\allowbreak{}<s\_c\_162>\allowbreak{}<s\_d\_130>\allowbreak{}<|sid\_end|>,
its title is "Nexxus Youth Renewal Plump and Lift Blow Dry Spray, 7.5 Ounce",
its categories are "Beauty > Hair Care > Styling Products > Hair Sprays";
<|sid\_begin|>\allowbreak{}<s\_a\_140>\allowbreak{}<s\_b\_205>\allowbreak{}<s\_c\_36>\allowbreak{}<s\_d\_36>\allowbreak{}<|sid\_end|>,
its title is "Nexxus Youth Renewal Rejuvenating Elixir, 0.94 Ounce",
its categories are "Beauty > Hair Care > Hair \& Scalp Treatments";
<|sid\_begin|>\allowbreak{}<s\_a\_155>\allowbreak{}<s\_b\_190>\allowbreak{}<s\_c\_158>\allowbreak{}<s\_d\_81>\allowbreak{}<|sid\_end|>,
its title is "CLEAR MEN SCALP THERAPY 2 in 1 AntiDandruff Shampoo and Conditioner,
Dry Scalp Hydration, 12.9oz", its categories are "Beauty > Hair Care > Shampoos";
<|sid\_begin|>\allowbreak{}<s\_a\_113>\allowbreak{}<s\_b\_67>\allowbreak{}<s\_c\_140>\allowbreak{}<s\_d\_115>\allowbreak{}<|sid\_end|>,
its title is "Schick Hydro Silk Disposable Razor, 3 Count",
its categories are "Beauty > Skin Care > Body > Moisturizers > Oils";
<|sid\_begin|>\allowbreak{}<s\_a\_21>\allowbreak{}<s\_b\_45>\allowbreak{}<s\_c\_114>\allowbreak{}<s\_d\_49>\allowbreak{}<|sid\_end|>,
its title is "Own Products Refining Moisture Night Cream",
its categories are "Beauty > Skin Care > Face > Creams \& Moisturizers > Night Creams";
<|sid\_begin|>\allowbreak{}<s\_a\_113>\allowbreak{}<s\_b\_204>\allowbreak{}<s\_c\_185>\allowbreak{}<s\_d\_233>\allowbreak{}<|sid\_end|>,
its title is "Nivea Q10 Skin Firming Body Lotion , 13.5 fl oz  (Pack of 2)",
its categories are "Beauty > Skin Care > Body > Moisturizers > Lotions";
<|sid\_begin|>\allowbreak{}<s\_a\_135>\allowbreak{}<s\_b\_169>\allowbreak{}<s\_c\_250>\allowbreak{}<s\_d\_60>\allowbreak{}<|sid\_end|>,
its title is "L'Oreal Paris Age Perfect Hydra-Nutrition Moisturizer, 1.7-Fluid Ounce",
its categories are "Beauty > Skin Care > Face > Creams \& Moisturizers >
Fluids \& Lotions > Fluids";
<|sid\_begin|>\allowbreak{}<s\_a\_238>\allowbreak{}<s\_b\_3>\allowbreak{}<s\_c\_5>\allowbreak{}<s\_d\_18>\allowbreak{}<|sid\_end|>,
its title is "Cristophe Professional Glossing Shampoo, 10 Ounce",
its categories are "Beauty > Hair Care > Shampoos";
<|sid\_begin|>\allowbreak{}<s\_a\_140>\allowbreak{}<s\_b\_173>\allowbreak{}<s\_c\_52>\allowbreak{}<s\_d\_91>\allowbreak{}<|sid\_end|>,
its title is "Tresemme Keratin Smooth Smoothing Creme Serum, 3.5 Ounce",
its categories are "Beauty > Hair Care > Styling Products > Creams, Gels \& Lotions";
<|sid\_begin|>\allowbreak{}<s\_a\_206>\allowbreak{}<s\_b\_60>\allowbreak{}<s\_c\_224>\allowbreak{}<s\_d\_48>\allowbreak{}<|sid\_end|>,
its title is "Natures Bounty Optimal Solutions Hair, Skin and Nails Gummies, 80 Count",
its categories are "Beauty > Skin Care";
<|sid\_begin|>\allowbreak{}<s\_a\_140>\allowbreak{}<s\_b\_177>\allowbreak{}<s\_c\_103>\allowbreak{}<s\_d\_200>\allowbreak{}<|sid\_end|>,
its title is "Dove Hair Styling Oxygen Moisture Leave In Foam, 5.1 Ounce",
its categories are "Beauty > Hair Care > Styling Products > Mousses \& Foams";<|im\_end|>
<|im\_start|>assistant
<think>
<pause>\allowbreak{}<pause>\allowbreak{}<pause>
</think>
\end{promptbox}

\subsection{SID Metadata Decoding Prompt}
\label{app:sid_decode_prompts}
For the metadata recovery experiment in Table~\ref{tab:metadata}, we use the first two items in each dataset's pretraining file as in-context examples and then query the remaining items. No judge model is used; predictions are evaluated by exact string match after parsing the generated Title: and Category: fields. The prompt template is shown below, with the final user turn line-wrapped for readability:
\begin{promptbox}
<|im\_start|>system
Please generate the title and category of the product based on its semantic ID.<|im\_end|>
<|im\_start|>user
\{shot\_1\_sid\}<|im\_end|
<|im\_start|>assistant
Title: "\{shot\_1\_title\}", Category: "\{shot\_1\_categories\}"
<|im\_end|>
<|im\_start|>user
\{shot\_2\_sid\}<|im\_end|
<|im\_start|>assistant
Title: "\{shot\_2\_title\}", Category: "\{shot\_2\_categories\}"
<|im\_end|>
<|im\_start|>user
Generate the Title and Category of this product \{query\_sid\}.
Please only generate with NO thinking! <|im\_end|>
<|im\_start|>assistant
\end{promptbox}

Generation uses greedy decoding (do\_sample=False) with max\_new\_tokens=256.

\subsection{Sample Rationales Used in CoT SFT}
Below we provide the reasoning field from the first sample of each CoT SFT variant.

\noindent\textbf{Template-Category.}
\begin{promptbox}
The user is likely to buy items in the Beauty $>$ Tools \& Accessories $>$ Mirrors $>$ Makeup Mirrors category.
\end{promptbox}

\noindent\textbf{Template-Extended.}
\begin{promptbox}
The user demonstrates interest in hair care and skin care products. By identifying the user's preference in natural, high-quality beauty and personal care items, we can predict that, to complement their previous purchases and enhance their daily grooming routine, the user will purchase beauty tools, for example, a Home Travel 9X/1X Folding Lighted Cosmetic Mirror.
\end{promptbox}

\noindent\textbf{Gemini 3.1 Flash-Lite Free-form.}
\begin{promptbox}
The user consistently purchases hair care products and beauty tools, suggesting a strong interest in hair maintenance and styling. Given the previous acquisition of hair care treatments and a cosmetic mirror, the model can infer a logical progression toward purchasing functional hair styling accessories like clips to complement their existing routine.
\end{promptbox}

\noindent\textbf{Gemini 3.1 Pro Free-form.}
\begin{promptbox}
The user's purchase history demonstrates a strong focus on hair care and beauty maintenance, moving from washing and conditioning treatments to grooming tools like a cosmetic mirror. An LLM would logically infer that after acquiring products to clean and treat their hair, the user's next step in their routine would be purchasing styling accessories like hair clips to manage and style it.
\end{promptbox}

\noindent\textbf{Gemini 3.1 Flash-Lite Rejection.}
\begin{promptbox}
The user consistently purchases hair care products and beauty tools, indicating a strong focus on hair maintenance and grooming. By identifying the pattern of hair-related purchases, the model can infer a logical progression toward styling accessories like hair clips to complement the existing hair care routine.
\end{promptbox}

\noindent\textbf{Gemini 3.1 FL Gemini Rejection.}
\begin{promptbox}
The user consistently invests in high-quality, natural, and specialized personal grooming products, suggesting a transition from purchasing consumable maintenance items to acquiring tools that facilitate their beauty and self-care routine. A lighted cosmetic mirror serves as a logical functional upgrade to complement their established regimen of premium hair and skin care products.
\end{promptbox}

\noindent\textbf{Gemini 3.1 Flash-Lite Restricted.}
\begin{promptbox}
Step 1: The user's purchase history shows a consistent focus on beauty and personal care products. Step 2: This suggests the user is expanding from consumables to grooming tools. Step 3: A lighted cosmetic mirror is the most logical next purchase because it supports the user's existing routine.
\end{promptbox}

\subsection{Teacher Model Prompts for Reasoning Generation in CoT SFT}
\label{app:prompts}

\subsubsection{Free-form Reasoning Prompt}
\begin{promptbox}
Given a user's purchase history, explain step-by-step how an LLM might reason to predict the next item they would purchase.

User's purchase history: \{description\}

The ground truth next item is:
- Title: "\{groundtruth\_title\}"
- Categories: "\{groundtruth\_categories\}"

Provide concise reasoning (1-2 sentences) explaining how an LLM could logically infer this next purchase from the user's prior items. Focus on patterns in categories, titles, or user preferences that would lead to this prediction. Do not include any preamble or labels—output only the reasoning text.
\end{promptbox}

\subsubsection{Format-Restricted Reasoning Prompt}
\begin{promptbox}
P1: System Role \& Task Definition
You are an expert at analyzing e-commerce purchase patterns and predicting user preferences.
You are helping create reasoning traces for recommendation training data.
You will receive a user's purchase history, a list of candidate items, and one known positive target item.
Your goal is to produce a predictive rationale: first infer the user's most likely next need from the purchase history, then verify whether the known target item is the best available match.

P2: Collaborative Context Presentation
=== USER PURCHASE HISTORY ===
\{history\_block\}

=== CANDIDATE ITEMS ===
\{candidate\_block\}

=== KNOWN TARGET ITEM ===
Candidate \{target\_candidate\_id\}: Title: \{target\_record["title"]\}; Categories: \{target\_record["categories"]\}; SID: \{target\_record["sid"]\}

P3: Reasoning Procedure
Work in two phases.
Phase A: Before relying on the known target item, identify the purchase-history items with the strongest evidence. Cite the smallest number of SIDs needed to support the inferred pattern, usually 1-4. Based on those items, infer the user's likely next need, replenishment signal, complement need, routine continuation, tool-versus-consumable transition, or category progression.
Phase B: Then evaluate whether the known target item matches that inferred need better than the strongest one or two alternatives in the candidate list.
Prefer concrete signals such as repetition, recency, category progression, complementarity, brand continuity, and tool-versus-consumable transitions when they are supported by the evidence.

P4: Critical Guidelines as Output Constraint
CRITICAL GUIDELINES:
1. When referring to purchase-history items, cite them directly using their SID.
2. In Step 1 and Step 2, do not mention the target title, target SID, or candidate numbers.
3. Use only evidence visible in the provided titles and categories.
4. Do not hallucinate hidden preferences or unsupported product attributes.
5. In Step 3, compare against at most two real alternative candidates; do not discuss the full candidate list.
6. If the target is not strongly supported, say "the fit is weaker than ideal but still the best available match" instead of inventing certainty.
7. Keep the rationale concise, specific, and information-dense. Avoid generic filler.
8. Do not output markdown fences.

P5: Structured Multi-Step Reasoning Format
Return strict JSON with a single key "reasoning".
The value must be one string in this format:
Step 1: History-only evidence summary citing the most relevant SIDs and the dominant recurring pattern.
Step 2: Based only on the history, infer the user's most likely next need or next category, and state confidence as high, medium, or low.
Step 3: Now evaluate why the target item "\{target\_record["title"]\}" is the best available match to that inferred need relative to the strongest one or two alternatives.
Summary: One short concluding sentence, under 20 words.
\end{promptbox}

\subsection{General Language Benchmark Prompts and Generated Reasoning}
\label{app:general_language_prompts}

For the diagnostic experiment in Table~\ref{tab:forgetting}, we evaluated the target model on MMLU, HellaSwag, PIQA, and ARC-Challenge using a general-language benchmark run. 

\noindent\textbf{Text-generation prompt.}
For the target model's text generation, we used a chat-style prompt that explicitly opens a thinking block:
\begin{promptbox}
<|im\_start|>user
Answer the following multiple-choice question. You may explain briefly if useful, but finish with 'Final answer: <letter>'.
\{multiple-choice block\}
<|im\_end|>
<|im\_start|>assistant
<think>
\end{promptbox}
The multiple-choice block lists the question followed by lettered options and ends with Final answer:. For MMLU only, the block contains five in-context examples sampled from the MMLU development split before the test question; the other benchmarks are zero-shot. HellaSwag questions are prefixed with Complete the sentence:. PIQA uses the two candidate solutions as options A/B. ARC-Challenge uses the answer choices from the dataset after normalizing them to A/B/C/D labels.

\noindent\textbf{Logit prompt.}
For logit-based accuracy, we did not ask the model to generate a rationale. Instead, we used the following prompt and compared the next-token logits assigned to the option letters:
\begin{promptbox}
Answer the following multiple-choice question with the option letter only.
\{multiple-choice block ending with Answer:\}
\end{promptbox}
MMLU again uses five in-context examples, each ending with Answer: \{gold letter\}, followed by the test question ending with Answer:. HellaSwag, PIQA, and ARC-Challenge use the same zero-shot question blocks as above but end with Answer:.

\noindent\textbf{What the model actually generated.} Text generations used greedy decoding (do\_sample=False) with max\_new\_tokens=128. The extracted reasoning\_text field was non-empty for all 27,094 target examples. However, the generated text was usually not a multi-step rationale. It was most often a short answer-likelihood statement inside the \texttt{<think>} block, followed after \texttt{</think>} by SID tokens. For example, the first ARC-Challenge row has raw generation:
\begin{promptbox}
The user is likely to answer C
</think>
<|sid\_begin|>\allowbreak{}<s\_a\_91>\allowbreak{}<s\_b\_84>
<s\_c\_156>\allowbreak{}<s\_d\_20>\allowbreak{}<|sid\_end|>\allowbreak{}<|im\_end|>
\end{promptbox}
Thus, the content inside \texttt{<think>...</think>} for that example is exactly:
\begin{promptbox}
The user is likely to answer C.
\end{promptbox}
Representative extracted reasoning\_text entries from the artifact are:
\begin{itemize}
    \item MMLU: The user is likely to answer D
    \item HellaSwag: The user is likely to answer D
    \item PIQA: The user is likely to answer B
    \item ARC-Challenge: The user is likely to answer C
\end{itemize}
Across all target examples, the most frequent extracted reasoning strings were The user is likely to answer C (11,891 examples), The user is likely to answer B (5,938), The user is likely to answer D (5,118), and The user is likely to answer A (1,732). There were 215 unique extracted reasoning strings in total. In 27,074 of 27,094 examples, the text after \texttt{</think>} began with SID tokens rather than a natural-language final answer. These artifacts show that the model learned to fill the thinking block with a shallow answer-prediction phrase, while the explicitly verbalized answer remained unreliable.

\subsection{Dataset Statistics}

\begin{table}[h]
\centering
\begin{tabular}{lccc}
\toprule
\textbf{Dataset} & \textbf{\# Users} & \textbf{\# Items} & \textbf{\# Interactions} \\
\midrule
Beauty & 22,363 & 12,101 & 198,502 \\
Sports & 35,598 & 18,357 & 296,337 \\
Toys & 19,412 & 11,924 & 167,597 \\
\bottomrule
\end{tabular}
\caption{Dataset statistics after preprocessing. The three Amazon benchmarks span 11.9K--18.4K items and 167K--296K interactions, providing recommendation tasks of different scales.}
\label{tab:dataset_stats}
\end{table}

\section{Complementary Experiment Results}
\subsection{Inference Speed of CoT SFT Variants}
\label{app:cot_sft_inference_speed}

\begin{table}[t]
\centering
\begin{tabularx}{\linewidth}{@{}>{\raggedright\arraybackslash}Xc@{}}
\toprule
\textbf{Model} & \textbf{Seconds/Sample} \\
\midrule
Template-Category & $0.2043 \pm 0.0255$ \\
Template-Extended & $0.4211 \pm 0.0459$ \\
Gemini 3.1 Flash-Lite Free-form & $0.3224 \pm 0.0423$ \\
Gemini 3.1 FL Gemini Rejection & $0.3711 \pm 0.0466$ \\
Gemini 3.1 Flash-Lite Rejection & $0.3379 \pm 0.0431$ \\
\textsc{PauseRec} & $\mathbf{0.0591 \pm 0.0093}$\\
\bottomrule
\end{tabularx}
\caption{Inference latency for CoT SFT variants and \textsc{PauseRec} on 500 Beauty samples. \textsc{PauseRec} is fastest because it uses fixed \texttt{<pause>} tokens instead of generating natural-language rationales.}
\label{tab:cot_sft_inference_speed}
\end{table}

Table~\ref{tab:cot_sft_inference_speed} shows that all CoT SFT variants incur substantially higher inference latency than \textsc{PauseRec}. Even the shortest template rationale is about 3.5$\times$ slower than \textsc{PauseRec}, while the longer template and teacher-generated variants are roughly 5.5--7.1$\times$ slower. This gap comes from the need to autoregressively generate rationale tokens before decoding the SID. In contrast, \textsc{PauseRec} inserts a fixed number of \texttt{<pause>} tokens and immediately proceeds to constrained SID decoding, so its latency is largely independent of rationale length.

We ran the benchmark on a single NVIDIA A100-SXM4-80GB GPU. We did not use vLLM. The benchmark used Hugging Face AutoModelForCausalLM.generate with PyTorch, fp16 on CUDA, batch size 16, greedy decoding (do\_sample=False), max reasoning tokens 128, and max SID tokens 20.
\subsection{Parameter Analysis}
    \begin{table}[!t]
        \centering
        \small
        \setlength{\tabcolsep}{3pt}
        \renewcommand{\arraystretch}{0.93}
        \begin{tabular}{@{}cccccc@{}}
        \toprule
        \textbf{Dataset} & \textbf{\# ps} & H@5 & H@10 & N@5 & N@10 \\
        \midrule
        \multirow{4}{*}{Beauty} & 1 & \underline{0.0562} & \textbf{0.0760} & 0.0394 & \underline{0.0466} \\
        & 3 & 0.0559 & \underline{0.0758} & 0.0393 & 0.0465 \\
        & 5 & \textbf{0.0568} & 0.0746 & \textbf{0.0401} & \textbf{0.0467} \\
        & 10 & 0.0555 & 0.0742 & \underline{0.0396} & 0.0465 \\
        \midrule
        \multirow{4}{*}{Sports} & 1 & \underline{0.0290} & \underline{0.0416} & \underline{0.0200} & 0.0240 \\
        & 3 & \textbf{0.0294} & 0.0407 & \textbf{0.0203} & 0.0240 \\
        & 5 & \textbf{0.0294} & \textbf{0.0422} & \textbf{0.0203} & \textbf{0.0245} \\
        & 10 & 0.0286 & 0.0415 & 0.0199 & \underline{0.0241} \\
        \midrule
        \multirow{4}{*}{Toys} & 1 & \underline{0.0614} & \underline{0.0835} & \textbf{0.0440} & \textbf{0.0513} \\
        & 3 & 0.0610 & 0.0831 & 0.0434 & 0.0506 \\
        & 5 & \textbf{0.0615} & \textbf{0.0838} & 0.0434 & \underline{0.0509} \\
        & 10 & 0.0612 & 0.0826 & \underline{0.0437} & \underline{0.0509} \\
        \bottomrule
        \end{tabular}
        \caption{Extended results for different numbers of \texttt{<pause>} tokens. No single value dominates every dataset and metric, but $k=5$ gives the best overall tradeoff and is used in the main experiments; boldface and underlining mark the best and second-best results.}
        \label{tab:pauserec_pause_tokens}
    \end{table}
In this subsection, we provide supplementary experiment results on parameter analysis for our proposed \textsc{PauseRec} pipeline. Specifically, as shown in Table~\ref{tab:pauserec_pause_tokens}, we provide the full pause-count sweep behind Fig.~\ref{fig:num_pauses}. We observe that, the number of \texttt{<pause>} tokens yielding best performance is not identical for every dataset and metric, but $k{=}5$ is the most robust setting: number of \texttt{<pause>} tokens being 5 is best or tied for best on 9 of 12 metrics and remains close to the best result on the remaining metrics. Using only one or three pauses is already competitive, suggesting that a small latent computation window is useful. 
Increasing to ten pauses does not provide consistent additional gains and sometimes slightly hurts performance, indicating that the benefit of pause-based reasoning saturates after a moderate number of latent steps.

\end{document}